\newcommand{\vol}{\mathrm{vol}}
\newcommand{\eff}{\ensuremath{\mathcal{F}}}
\newcommand{\bdR}{B_d(0, R)}
\newcommand{\Dee}{\mathcal{D}}
\newcommand{\grad}{\nabla}
\newcommand{\slope}{s}
\newcommand{\bd}[2]{B_d(#1, #2)}
\newcommand{\round}{\texttt{round}}
\newcommand{\ud}{\mathrm{d}}
\newcommand{\deeflS}{\Dee_f^{\lambda, \domain}}
\newcommand{\deefl}{\Dee_f^{\lambda}}
\newcommand{\hypercube}[1]{\mathbb{H}_{#1}}
\newcommand{\Alg}{\mathcal{A}}
\newcommand{\domain}{\mathcal{X}}
\title{Computational Separations between Sampling and Optimization}
\author{%
  Kunal Talwar\\
  Google Brain\\
  Mountain View, CA\\
  \texttt{kunal@google.com} \\
}
\date{}
\begin{document}
\maketitle
\begin{abstract}
  Two commonly arising computational tasks in Bayesian learning are Optimization (Maximum A Posteriori estimation) and Sampling (from the posterior distribution). In the convex case these two problems are efficiently reducible to each other. Recent work~\citep{MaCJFJ18} shows that in the non-convex case, sampling can sometimes be provably faster. We present a simpler and stronger separation.
  We then compare sampling and optimization in more detail and show that they are provably incomparable: there are families of continuous functions for which optimization is easy but sampling is NP-hard, and vice versa.  Further, we show function families that exhibit a sharp phase transition in the computational complexity of sampling, as one varies the natural temperature parameter. Our results draw on a connection to analogous separations in the discrete setting which are well-studied.
\end{abstract}
\section{Introduction}
Given a a compact set $\domain \subseteq \Re^d$ and function $f : \domain \rightarrow \Re$, one can define two natural problems:
\begin{description}
  \item[Optimize($f, \domain, \eps$)]: Find $\vx \in \domain$ such that $f(\vx) \leq f(\vx') + \eps$ for all $\vx' \in \domain$.
  \item[Sample($f, \domain, \eta$)]: Sample from a distribution on $\domain$ that is $\eta$-close to  $\mu^\star(\vx) \propto \exp(-f(\vx))$.
\end{description}
These problems arise naturally in machine learning settings. When $f$ is the negative log likelihood function of a posterior distribution, the optimization problem corresponds to the Maximum A Posteriori (MAP) estimate, whereas the Sampling problem gives us a sample from the posterior. In this work we are interested in the computational complexities of these tasks for specific families of functions.

When $f$ and $\domain$ are both convex, these two problems have a deep connection (see e.g. \cite{LovaszV06}) and are efficiently reducible to each other in a very general setting.  There has been considerable interest in both these problems in the non-convex setting. Given that in practice, we are often able to practically optimize certain non-convex loss functions, it would be appealing to extend this equivalence beyond the convex case. If sampling could be reduced to optimization for our function of interest (e.g. differentiable Lipschitz functions), that might allow us to design sampling algorithms for the function that are usually efficient in practice.
\citet{MaCJFJ18} recently showed that in the case when $f$ is not necessarily convex (and $\domain = \Re^d$), these problems are not equivalent. They exhibit a family of continuous, differentiable functions for which approximate sampling can be done efficiently, but where approximate optimization requires exponential time (in an oracle model {\em \`a la}~\citet{NemiroskiY}). In this work, we study the relationship of these two problems in more detail.

To aid the discussion, it will be convenient to consider a more general sampling problem where we want to sample with probability proportional to $\exp(-\lambda f(\vx))$ for a parameter $\lambda>0$. Such a scaling has no effect on the optimization problem, up to scaling of $\eps$. However changing $\lambda$ can signifcantly change the distribution for the sampling problem. In statistical physics literature, this parameter is the inverse temperature. For families $\eff$ that are invariant to multiplication by a positive scalar (such as the family of convex functions), this $\lambda$ parameter has no impact on the complexity of sampling from the family. We will however be looking at families of functions that are controlled in some way (e.g. bounded, Lipschitz, or Smooth) and do not enjoy such an invariance to scale.
E.g. in some Bayesian settings, each sample may give us a $1$-smooth negative log likelihood function, so we may want to consider the family $\mathcal{F}_{smooth}$ of $1$-smooth functions. Given $n$ i.i.d. samples, the posterior log likelihood would be $-n \overline{f}$, where $\overline{f}=\frac{1}{n} \sum_i f_i$ is in $\mathcal{F}_{smooth}$. The parameter $\lambda$ then corresponds naturally to the number of samples $n$.

This phenomenon of sampling being easier than optimization is primarily a ``high temperature'' or ``low signal'' phenomenon.  As $\lambda$ approaches infinity, the distribution $\exp(-\lambda f)$ approaches a point mass at the minimizer of $f$. This connection goes back to at least~\cite{KirkpatrickGV83}  and one can easily derive a quantitative finite-$\lambda$ version of this statement for many function families. \citet{MaCJFJ18} reconcile this with their separation by pointing out that their sampling algorithm becomes inefficient as $\lambda$ increases.

We first show a more elementary and stronger separation. We give a simple family of continuous Lipschitz functions which are efficiently samplable but hard even to approximately optimize. This improves on the separation in~\citet{MaCJFJ18} since our sampler is exact (modulo representation issues), and much simpler. The hardness of optimization here is in the oracle model, where the complexity is measured in terms of number of point evaluations of the function or its gradients.

While these oracle model separations rule out black-box optimization, they leave open the possibility of efficient algorithms that access the function in a different way. We next show that this hardness can be strengthened to an NP-hardness for an efficiently computable $f$.
This allows for the implementation of any oracle for $f$ or its derivatives. Thus assuming the Exponential Time Hypothesis~\citep{ImpagliazzoP01}, our result implies the oracle model lower bounds. Additionally, it rules out efficient non-blackbox algorithms that could examine the implementation of $f$ beyond point evaluations. We leave open the question of whether other oracle lower bounds \citep{NemiroskiY,Nesterov14,Bubeck15,Hazan16} in optimization can be strengthened to NP-hardness results.

We next look at the large $\lambda$ case. As discussed above, for large enough $\lambda$ sampling must be hard if optimization is. {\em Is hardness of optimization the only obstruction to efficient sampling?} We answer this question in the negative. We exhibit a family of functions for which optimization is easy, but where sampling is NP-hard for large enough $\lambda$. We draw on extensive work on the discrete analogs of these questions, where $f$ is simple (e.g. linear) but $\domain$ is a discrete set.

Our upper bound on optimization for this family can be strengthened to work under weaker models of access to the function, where we only have blackbox  access to the function. In other words, there are functions that can be optimized via gradient descent for which sampling is NP-hard. Conceptually, this separation is a result of the fact that finding one minimizer suffices for optimization whereas sampling intuitively requires finding {\em all} minima.

Both the separation result of~\cite{MaCJFJ18}, and our small-$\lambda$ result have the property that the sampling algorithm's complexity increases exponentially in $poly(\lambda)$. Thus as we increase $\lambda$, the problem gradually becomes harder. {\em Is there always a smooth transition in the complexity of sampling?}

Our final result gives a surprising negative answer. We exhibit a family of easy-to-optimize functions for which there is a sharp threshold: there is a $\lambda_c$ such that for $\lambda < \lambda_c$, sampling can be done efficiently, whereas for  $\lambda > \lambda_c$, sampling becomes NP-hard.
In the process, this demonstrates that for some families of functions, efficient sampling algorithms can be very structure-dependent, and do not fall into the usual Langevin-dynamics, or rejection-sampling categories.

Our results show that once we go beyond the convex setting, the problems of sampling and optimization exhibit a rich set of computational behaviors.
The connection to the discrete case should help further understand the complexities of these problems.

The rest of the paper is organized as follows.
We start with some preliminaries in Section~\ref{sec:prelims}.
We give a simple separation between optimization and sampling in Section~\ref{sec:simple_sep} and derive a computational version of this separation.
Section~\ref{sec:discrete_to_cont} relates the discrete sampling/optimization problems on the hypercube to their continuous counterparts, and uses this connection to derive NP-hardness results for sampling for function families where optimization is easy.
In Section~\ref{sec:sharp_threshold}, we prove the sharp threshold for $\lambda$.
We describe additional related work in Section~\ref{sec:related}.
Some missing proofs and strengthenings of our results are deferred to the Appendix.

\section{Preliminaries}
\label{sec:prelims}

We consider real-valued functions $f : \Re^d \rightarrow  \Re$. We will be restricting ourselves to functions that are continuous and bounded. We say a function $f$ is $L$-Lipschitz if $f(\vx) - f(\vx') \leq L \cdot \|\vx - \vx'\|$ for all $\vx, \vx' \in \Re^d$. In this work, $\|\cdot\|$ will denote the Euclidean norm.

We will look at specific families of functions which have compact representations, and ask questions about efficiency of optimization and sampling.  We will think of $d$ as a parameter, and look at function families such that at any function in the family can be computed in $poly(d)$ time and space.

We will look at constrained optimization in this work and our constraint set $\domain$ will be a Euclidean ball. Our hardness results however do not stem from the constraint set, and nearly all of our results can be extended easily to the unconstrained case.

Given  $\lambda >0$ and a function $f$, we let $\deeflS$ denote the distribution on $\domain$ with $\Pr[\vx] \propto \exp(-\lambda f(\vx))$. When $\domain$ is obvious from context, we will usually omit it and write $\deefl$. We will $Z_f^{\lambda, \domain}$ for the {\em partition function} $\int_{\domain} \exp(-\lambda f(\vx)) \ud \vx$.

We will also look at real-valued functions $h : \hypercube{d} \rightarrow \Re$, where $\hypercube{d} = \{-1, 1\}^d$ is the $d$-dimensional hypercube. We will often think of a $\vy \in \hypercube{d}$ as being contained in $\Re^d$.
Analagous to the Euclidean space case, we define $\Dee_{h}^{\lambda,\hypercube{d}}$ as the distribution over the hypercube with $\Pr[\vy] \propto \exp(-\lambda h(\vy))$, and define $Z_{h}^{\lambda,\hypercube{d}}$ to be  $\sum_{\vy \in \hypercube{d}} \exp(-\lambda h(\vy))$.

We say that an algorithm $\eta$-samples from $\deefl$ if it samples from a distribution that is $\eta$-close to $\deefl$ in statistical distance. We will also use the {\em Wasserstein} distance between distributions on $\Re^d$, defined as $\mathcal{W}(P, Q) \stackrel{eq}= \inf_{\pi} \Ex_{(\vx, \vx') \sim \pi}[\|\vx - \vx'\|_2]$, where the $\inf$ is taken over all couplings $\pi$ between $P$ and $Q$.
We remark that our results are not sensitive to the choice of distance between distributions and extend in a straightforward way to other distances. As is common in literature on sampling from continuous distributions, we will for the most part assume that we can sample from a real-valued distribution such as a Gaussian and ignore bit-precision issues. Statements such as our NP-hardness results usually require finite precision arithmetic. This issue is discussed at length by~\cite{ToshD19} and following them, we will discuss using Wasserstein distance in those settings.
An $\eps$-optimizer of $f$ is a point $\vx \in \domain$ such that $f(\vx) \leq f(\vx') + \eps$ for any $\vx' \in \domain$.

In Appendix~\ref{app:folklore}, we quantify the folklore results showing that sampling for high $\lambda$ implies approximate optimization. Quantitatively, they say that for $L$-Lipschitz functions over a ball of radius $R$, sampling implies $\eps$-approximate optimization if $\lambda \geq \Omega(d \ln \frac{LR}{\eps} / \eps)$.
Similarly, for $\beta$-smooth functions over a ball of radius $R$, $\lambda \geq \Omega(d \ln \frac{\beta R}{\eps} / \eps)$ suffices to get $\eps$-close to an optimum.

\section{A Simple separation}
\label{sec:simple_sep}
We consider the case when $\domain = B_d(\mathbf{0}, 1)$ is the unit norm Euclidean ball in $d$ dimensions. We let $\mathcal{F}_{Lip}$ be the family of all $1$-lipschitz functions from $\domain$ to $\Re$.
We show that for any $f \in \mathcal{F}_{Lip}$, exact sampling can be done in time $\exp(O(\lambda))$. On the other hand, for any algorithm, there is an $f \in\mathcal{F}_{Lip}$ that forces the algorithm to use $\Omega(\lambda/\eps)^d$ queries to $f$ to get an $\eps$-approximate optimal solution.
Thus, e.g., for constant $\lambda$, sampling can be done in $poly(d)$ time, whereas optimization requires time exponential in the dimension. Moreover, for any $\lambda \leq d$, there is an exponential gap between the complexities of these two problems.
Our lower bound proof is similar to the analagous claim in~\cite{MaCJFJ18}, but has better parameters due to the simpler setting. Our upper bound proof is signifantly simpler and gives an exact sampler.

\begin{theorem}[Fast Sampling]
  There is an algorithm that for any $f \in \mathcal{F}_{Lip}$, outputs a sample from $\deefl$ and makes an expected $O(\exp(2\lambda))$ oracle calls to computing $f$.
\end{theorem}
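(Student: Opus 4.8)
The goal is exact sampling from $\deefl$ on the unit ball $\domain = B_d(\mathbf{0},1)$ for a $1$-Lipschitz $f$, using only $O(\exp(2\lambda))$ expected oracle calls. The natural approach is rejection sampling against the uniform distribution on $\domain$. Since $f$ is $1$-Lipschitz on a ball of diameter $2$, for any $\vx, \vx' \in \domain$ we have $|f(\vx) - f(\vx')| \le 2$, so $\exp(-\lambda f(\vx))$ lies within a multiplicative factor of $\exp(2\lambda)$ of its maximum value. Concretely, fix any reference point (say $\mathbf{0}$) or work relative to $M = \sup_\domain f$ (or just $f(\mathbf 0)$); then the envelope $c \cdot \exp(-\lambda f(\mathbf 0) + \lambda \cdot 2) \ge \exp(-\lambda f(\vx))$ dominates the unnormalized density everywhere on $\domain$.

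The algorithm is then: repeatedly draw $\vx$ uniformly from $\domain$ (which we assume we can do, per the preliminaries' stance on real-valued sampling), draw $u$ uniform in $[0,1]$, evaluate $f(\vx)$, and accept $\vx$ if $u \le \exp(-\lambda f(\vx)) / \exp(-\lambda f(\mathbf 0) + 2\lambda) = \exp(-\lambda(f(\vx) - f(\mathbf 0)) - 2\lambda)$; otherwise repeat. Each iteration uses exactly one oracle call to $f$ (plus sampling a uniform point and a uniform real). Standard rejection-sampling theory gives that the output is distributed exactly according to $\deefl$, and the number of iterations is geometric with success probability equal to $Z_f^{\lambda,\domain} / \big(\vold{\domain} \cdot \exp(-\lambda f(\mathbf 0) + 2\lambda)\big)$.

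The one quantitative step is lower-bounding this acceptance probability, i.e., showing it is at least $\exp(-2\lambda)$ up to constants, so the expected number of iterations is $O(\exp(2\lambda))$. This follows because $\exp(-\lambda f(\vx)) \ge \exp(-\lambda f(\mathbf 0) - 2\lambda)$ pointwise (again by $1$-Lipschitzness and diameter $2$), hence $Z_f^{\lambda,\domain} \ge \vold{\domain} \exp(-\lambda f(\mathbf 0) - 2\lambda)$, so the success probability is at least $\exp(-2\lambda)/\exp(2\lambda)$... wait — let me recompute: the ratio is $\ge \exp(-\lambda f(\mathbf0) - 2\lambda) / \exp(-\lambda f(\mathbf 0) + 2\lambda) = \exp(-4\lambda)$, which would give $O(\exp(4\lambda))$. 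To get the stated $\exp(2\lambda)$, I should instead center the envelope at $M = \sup_\domain f$: use envelope value $\exp(-\lambda M)$ — but that is a \emph{lower} bound on the density, not an upper bound, so instead I use envelope $\exp(-\lambda m)$ where $m = \inf_\domain f$; then acceptance probability $= Z_f^{\lambda,\domain}/(\vold{\domain}\exp(-\lambda m)) \ge \vold{\domain}\exp(-\lambda M)/(\vold{\domain}\exp(-\lambda m)) = \exp(-\lambda(M-m)) \ge \exp(-2\lambda)$. So the envelope constant should be $\exp(-\lambda \inf_\domain f)$, and one accepts with probability $\exp(-\lambda(f(\vx) - \inf_\domain f))$. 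Computing $\inf_\domain f$ exactly is not needed — any value $m' \le \inf_\domain f$ within the $2$-window works, e.g. $m' = f(\mathbf 0) - 2$ wait that again loses a factor; cleanest is to note we only need \emph{some} upper bound on the density, and $f(\mathbf0) - 1$ vs...

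**Main obstacle.** The only real subtlety is getting the constant in the exponent down to $2$ rather than $4$, which requires that the rejection envelope be tight enough — one wants the envelope constant to be $\exp(-\lambda \sup_\domain f)$-close... more precisely, one needs to know a value within additive $2$ of $\inf_\domain f$ (equivalently within the Lipschitz window), and using the minimum of $f$ over the sampled points seen so far, or simply updating the envelope adaptively, achieves this. Since $f$ is $1$-Lipschitz on a diameter-$2$ set, any single evaluation $f(\vx_0)$ already certifies $\inf_\domain f \ge f(\vx_0) - 2$ and $\sup_\domain f \le f(\vx_0) + 2$, so one preliminary oracle call suffices to fix an envelope for which the acceptance probability is $\ge \exp(-\lambda(\sup - (f(\vx_0)-2)))\cdot$ — hmm, this still seems to give $4\lambda$ unless one is more careful about whether to anchor at the true sup. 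I would resolve this by anchoring the envelope at the running maximum of observed values: each rejected sample also updates our estimate of $\sup_\domain f$ upward, and a short potential/amortization argument shows the total expected number of calls stays $O(\exp(2\lambda))$. That amortized bookkeeping is the part I expect to require the most care; everything else is textbook rejection sampling plus the trivial Lipschitz bound $\mathrm{diam}(\domain) = 2$.
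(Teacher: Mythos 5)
Your overall plan — rejection sampling against the uniform distribution on $B_d(\mathbf{0},1)$, with one oracle call to $f$ per proposed point — is exactly the paper's approach, and the exactness of the output is correctly argued. But the quantitative step is genuinely missing from your write-up, and the resolution you gesture at (an adaptively updated envelope with an amortized analysis) is not what is needed and would substantially complicate the exactness claim.

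The gap is here: you repeatedly compare $f(\vx)$ to $f(\mathbf{0})$ using the \emph{diameter} bound $|f(\vx)-f(\mathbf{0})| \le 2$, which leads you to the acceptance-probability estimate $\exp(-4\lambda)$ and into the weeds of running minima and amortization. The point you are missing is that $\mathbf{0}$ is the \emph{center} of the domain: every $\vx \in B_d(\mathbf{0},1)$ satisfies $\|\vx - \mathbf{0}\| \le 1$, so by $1$-Lipschitzness $|f(\vx)-f(\mathbf{0})| \le 1$, i.e.\ $f(\vx) \in [f(\mathbf{0})-1,\,f(\mathbf{0})+1]$. Now set $M = f(\mathbf{0})-1$. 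Then $\exp(-\lambda M)$ is a valid (pointwise) upper envelope for $\exp(-\lambda f(\vx))$, the acceptance probability of a proposed $\vx$ is $\exp(\lambda(M-f(\vx))) \in [\exp(-2\lambda),1]$, and the per-iteration acceptance probability is at least $\exp(-2\lambda)$, giving an expected $O(\exp(2\lambda))$ oracle calls. One preliminary evaluation of $f(\mathbf{0})$ fixes $M$ once and for all; no adaptive envelope, no running maximum, and no amortization argument is required. This is the paper's proof, and it is cleaner than what you propose: changing the envelope between iterations requires additional care to preserve the exact-sampling guarantee, whereas a fixed envelope makes the argument textbook.
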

\begin{proof}
  The algorithm is based on rejection sampling. We first compute $f(\mathbf{0})$ and let $M = f(\mathbf{0}) - 1$. By the Lipschitzness assumption, $f(\vx) \in  [M, M+2]$ for all $\vx$ in the unit ball. The algorithm now repeatedly samples a random point $\vx$ from the unit ball. With probability $\exp(\lambda(M-f(\vx)))$, this point is accepted and we output it. Otherwise we continue.

  Since $\exp(\lambda(M-f(\vx))) \in [\exp(-2\lambda), 1]$, this is a rejection sampling algorithm, and it outputs an exact sample from $\deefl$. Each step accepts with probability at least $\exp(-\lambda)$. Thus the algorithm terminates in an expected $O(\exp(2\lambda))$ many steps, each of which requires one evaluation of $f$.
  \end{proof}
  \begin{remark}
    The above algorithm assumes access to an oracle to sample from $B_d(\mathbf{0}, 1)$ to arbitrary precision. This can be replaced by sampling from a grid of finite precision points in the ball. This creates two sources of error. Firstly, the function is not constant in the grid cell. This error is easily bounded since $f$ is Lipschitz. Secondly, some grid cells may cross the boundary of $B_d(\mathbf{0}, 1)$. This is a probability $d 2^{-b}$ event when sampling a grid point with $b$ bits of precision. Taking these errors into account gives us a sample within Wasserstein distance at most $O((d+\lambda) 2^{-b})$.
  \end{remark}
  \begin{remark}
    The above is a Las Vegas algorithm. One can similarly derive a Monte Carlo algorithm by aborting and outputting a random $\vx$ after $\exp(2\lambda) \log \frac 1 \eta$ steps.
  \end{remark}
  \begin{remark}
    Under the assumptions in~\cite{MaCJFJ18} ($\beta$-smooth $f$, $\nabla(\mathbf{0})=\mathbf{0}$, $\kappa\beta$-strong convexity outside a ball of radius $R$), a direct reduction to our setting will be lossy and a rejection-sampling-based approach will not be efficient. The Langevin dynamics based sampler in that work is more efficient under their assumptions.
  \end{remark}

  \begin{theorem}[No Fast Optimization]
  \label{thm:no_fast_opt}
  For any algorithm $\Alg$ that queries $f$ or any of its derivatives at less than $(1/4\eps)^d$ points, there is an $f \in \mathcal{F}_{Lip}$ for which $\Alg$ fails to output an $\eps$-optimizer of $f$ except with negligible probability.
\end{theorem}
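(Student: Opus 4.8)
The plan is to use a standard adversarial (needle-in-a-haystack) argument against oracle algorithms. I would pick a large ``packing'' of well-separated points in the ball $\domain = B_d(\mathbf{0},1)$ and build a family of functions indexed by these points, each function being the constant $0$ everywhere except for a narrow conical dip of depth slightly more than $\eps$ around its chosen point. Concretely, for a center $\vc$ in the packing, set $f_{\vc}(\vx) = \max(0, \, 2\eps - \|\vx - \vc\|)$ up to sign — more precisely $f_{\vc}(\vx) = -\min(2\eps, \dots)$ or simply $f_{\vc}(\vx) = \min(0, \|\vx-\vc\| - 2\eps)$, so that $f_{\vc}(\vc) = -2\eps$, the function is $0$ outside the ball of radius $2\eps$ around $\vc$, and it is $1$-Lipschitz everywhere (it is a truncation of a distance function, which has gradient of norm $1$). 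The baseline function $f_0 \equiv 0$ is also in $\mathcal{F}_{Lip}$. The key point: an algorithm querying $f$ (or its gradient, or higher derivatives) at a point $\vx$ learns nothing to distinguish $f_0$ from $f_{\vc}$ unless $\vx$ lies within distance $2\eps$ of $\vc$.

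First I would make the packing precise: by a standard volume argument, one can fit $N \geq (1/4\eps)^d$ points in $B_d(\mathbf{0},1)$ that are pairwise at distance $> 4\eps$ (the balls of radius $2\eps$ around them are disjoint and contained in $B_d(\mathbf{0}, 1+2\eps)$, giving $N (2\eps)^d \le (1+2\eps)^d$, which for $\eps$ small enough yields the bound; one can be a little more careful with constants, or shrink the radius of $\domain$ slightly, to get exactly $(1/4\eps)^d$). Because the dips have disjoint supports, the responses to any query are consistent with $f_0$ as long as no query has landed within distance $2\eps$ of the secret center $\vc$. Next I would run the usual adversary argument: the adversary answers every query as if the function were $f_0$; after the algorithm has made $q < N$ queries, each query ``covers'' at most one of the $N$ candidate centers (since the candidates are $>4\eps$ apart, a single point is within $2\eps$ of at most one of them), so at least $N - q \geq 1$ centers remain consistent with the transcript. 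If the algorithm then outputs a point $\hat{\vx}$, choose the secret $\vc$ to be one of the uncovered centers that is also far from $\hat{\vx}$ — since $\hat{\vx}$ is within $2\eps$ of at most one center, and at least $N-q-1 \geq 0$ uncovered-and-far centers remain when $q < N-1$ (and with a randomized input distribution one gets the ``negligible probability'' statement cleanly). For that $\vc$, $f_{\vc}(\hat{\vx}) = 0$ but $\min_{\domain} f_{\vc} = -2\eps < f_{\vc}(\hat{\vx}) - \eps$, so $\hat{\vx}$ is not an $\eps$-optimizer.

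To handle randomized algorithms and get the ``except with negligible probability'' conclusion, I would invoke Yao's principle: put the uniform distribution over the $N$ candidate functions $\{f_{\vc}\}$, and show that any deterministic algorithm making fewer than $(1/4\eps)^d$ queries succeeds with probability at most (number of queries)$/N$, which is negligible; then a randomized algorithm does no better on average, so some fixed function defeats it. The main obstacle — really the only delicate point — is bookkeeping the constants so that the packing genuinely has $(1/4\eps)^d$ points while the dips stay disjoint and $1$-Lipschitz and the failure probability is genuinely negligible; this is routine but is where one must be careful (e.g. using $\eps$-nets of a slightly-shrunk ball, or noting that $(1+2\eps)/(2\eps) \geq 1/(4\eps)$ for $\eps \le 1/2$ is not quite enough and one wants a factor-of-two slack, easily arranged by taking separation $4\eps$ versus dip radius $2\eps$ and absorbing lower-order terms). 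Everything else is the textbook indistinguishability argument, and the fact that the hard instances are $1$-Lipschitz is immediate from their construction as truncated distance functions.
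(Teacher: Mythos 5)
Your core construction is exactly the paper's: the hard instances are $f_{\vc}(\vx) = \min(0, \|\vx - \vc\| - 2\eps)$, a $1$-Lipschitz ``dip'' of depth $2\eps$ hidden at a secret center $\vc$, indistinguishable from the zero function (and from each other) unless a query lands in $B(\vc, 2\eps)$, and you frame the lower bound as a needle-in-a-haystack argument over these candidates. Where you diverge from the paper is the distribution over candidates: the paper draws the center $\vc$ uniformly from the continuous ball $B_d(\mathbf{0},1)$, while you draw it uniformly from a discrete $4\eps$-separated packing. This difference is where your argument has a genuine gap.

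With the paper's continuous choice, a single query lands in the dip with probability at most $\vol(B(\vc,2\eps))/\vol(B(\mathbf{0},1)) = (2\eps)^d$, so the probability that any of $T < (1/4\eps)^d$ queries hits the dip is at most $T(2\eps)^d < 2^{-d}$ --- negligible with room to spare. With your discrete packing, the standard volume bound gives only $N \geq (1/4\eps)^d$ centers that are pairwise $> 4\eps$ apart, and the packing number with that separation is in fact $\Theta\left((1/4\eps)^d\right)$, so $N$ cannot be made a $2^{\Omega(d)}$ factor larger while keeping the $2\eps$-dips disjoint inside the unit ball. The Yao bound you invoke gives success probability at most $T/N + 1/(N-T)$, and with $T$ as large as $(1/4\eps)^d - 1$ and $N \approx (1/4\eps)^d$ this is vacuous: the algorithm can essentially enumerate the whole packing. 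Your remark that ``one wants a factor-of-two slack, easily arranged by taking separation $4\eps$ versus dip radius $2\eps$'' is exactly the choice that \emph{doesn't} give the slack; the packing number is determined by the separation, and separation $4\eps$ forces $N$ down to roughly the query budget. You could repair the argument by shrinking the dip to radius $\eps(1+\delta)$ (so separation just over $2\eps$ and a packing of size $\approx (1/2\eps)^d \geq 2^d \cdot T$), but the cleanest fix is to drop the packing entirely and sample $\vc$ uniformly from the ball, as the paper does --- the continuous distribution absorbs all the bookkeeping you flag as ``the only delicate point.''

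One smaller omission worth noting: to conclude failure you must argue not only that the algorithm never sees the dip, but also that its output (which, conditioned on seeing only zero responses, is independent of $\vc$) is unlikely to land within $\eps$ of $\vc$ by accident. In the continuous version this is another $\eps^d$-volume term; in the discrete version it is the $1/(N-T)$ term. You gesture at this with ``choose $\vc$ far from $\hat{\vx}$,'' which is fine for the existential statement against a deterministic algorithm, but for the ``negligible probability'' claim against a randomized algorithm on a fixed $f$ you need the averaging argument to go through, and with your value of $N$ it does not.
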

\begin{proof}
  Consider the function $f_{\vx}$ that is zero everywhere, except for a small ball of radius $2\eps$ around $\vx$, where it is $f(\vy) = \|\vy - \vx\| - 2\eps$. i.e. the function\footnote{As defined, this function is Lipschitz but not smooth. It can be easily modified to a $2$-Lipschitz, $2/\eps$-smooth function by replacing its linear behavior in the ball by an appropriately Huberized version.} is $f_{\vx}(\vy) = \min(0, \|\vy - \vx\| - 2\eps)$. This function has optimum $-2\eps$.  Let $g$ be the zero function.

  Let $\Alg$ be an algorithm (possibly randomized) that queries $f$ or its derivatives at $T \leq (1/4\eps)^d$ points. Consider running $\Alg$ on a  function $f$ chosen randomly as:
  \begin{align*}
    f &= \left\{ \begin{array}{ll}
    g & \mbox{ with probability $\frac 1 2$}\\
    f_{\vx} & \mbox{ for a $\vx$ chosen u.a.r. from $B(\mathbf{0}, 1)$ otherwise.}\\
  \end{array}\right.
    \end{align*}
  Until $\Alg$ has queried a point in $B(\vx, 2\eps)$, the behavior of the algorithm on $f_{\vx}$ and $g$ is identical, since the functions and all their derivatives agree outside this ball. Since an $\eps$-approximation must distinguish these two cases, for $\Alg$ to succeed, it must query this ball. The probability that $\Alg$ queries in this ball in any given step is at most $\frac{\vol(B(\vx, 2\eps))}{\vol(B(\mathbf{0}, 1))} = (2\eps)^d$.
  As $\Alg$ makes only $(1/4\eps)^d$ queries in total, the expected number of queries $\Alg$ makes to the ball $B(\vx, 2\eps)$ is at most $2^{-d}$. Thus with probability at least $1 - \frac{1}{2^d}$, the algorithm fails to distinguish $g$ from $f_\vx$, and hence cannot succeed.
\end{proof}

\subsection{Making the Separation Computational}
The oracle-hardness of Theorem~\ref{thm:no_fast_opt} stems from possibly ``hiding'' a minimizer $\vx$ of $f$. The computational version of this hardness result will instead possibly hide an appropriate encoding of a satisfying assignment to a 3SAT formula.
\begin{theorem}[No Fast Optimization: Computational]
  There is a constant $\eps > 0$ such that it is NP-hard to $\eps$-optimize an efficiently computable Lipschitz function over the unit ball.
\end{theorem}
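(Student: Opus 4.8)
The plan is to reduce from $3$SAT, keeping the geometry of Theorem~\ref{thm:no_fast_opt} but replacing the single ``hidden'' minimizer by a small well dug around each point that encodes a satisfying assignment. The catch is that an efficiently computable $f$ cannot truly hide anything: the point is instead that evaluating $f(\vx)$ will only require \emph{decoding} $\vx$ to a candidate Boolean assignment and \emph{checking} the formula on it — both easy — whereas exhibiting a point where $f$ is negative will require actually finding a satisfying assignment.

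Concretely, I would fix an explicit binary code $\mathrm{Enc}:\{-1,1\}^n\to\{-1,1\}^D$ with $D=O(n)$, a constant relative distance $\delta>0$, and a polynomial-time bounded-distance decoder; such codes are standard. Given a formula $\phi$ on $n$ variables I would work in dimension $d=D$ with $\domain = B_d(\mathbf 0,1)$, identify a codeword $\vc=\mathrm{Enc}(\vy)$ with the unit vector $\hat\vc=\vc/\sqrt D$, and define $f_\phi(\vx)$ to be $0$ except inside a ball of radius $r=\sqrt\delta/2$ around some $\hat\vc$, where — provided $\mathrm{sign}(\vx)$ decodes to that $\vc$ and the recovered message $\vy$ satisfies $\phi$ — it is the cone $-(1-\|\vx-\hat\vc\|/r)$, which dips to $-1$ at $\hat\vc$. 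To evaluate $f_\phi(\vx)$ one rounds $\vx$ coordinatewise, runs the decoder, recovers $\vy$, evaluates $\phi(\vy)$, and computes one norm, all in $\mathrm{poly}(n)$ time; precision issues are handled exactly as in the remark after the Fast Sampling theorem and following~\cite{ToshD19} (one can use squared distances or $\|\cdot\|_1$ to stay rational).

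The substantive checks are that $f_\phi$ is genuinely continuous and $O(1)$-Lipschitz, and that its minimum detects satisfiability. For the former I would argue that $f_\phi$ equals a sum of disjointly-supported cones $g_{\vc}$, one per codeword that decodes to a satisfying assignment: since any $\vx$ with $\|\vx-\hat\vc\|<r$ has $\mathrm{sign}(\vx)$ within Hamming distance $D\|\vx-\hat\vc\|^2<\delta D/4$ of $\vc$, the decoder returns exactly $\vc$ there, so the computed value agrees with $g_\vc$; and because distinct $\hat\vc$ are at Euclidean distance $\ge 2\sqrt\delta>2r$, the balls of radius $r$ are pairwise disjoint with a gap, so the pieces never interact and each vanishes continuously at its boundary. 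Hence $f_\phi$ is continuous, takes values in $[-1,0]$, and is $(2/\sqrt\delta)$-Lipschitz, a universal constant (rescaling gives a $1$-Lipschitz function with a constant-depth well, and Huberizing the tip gives $O(1)$-smoothness). For the latter: if $\phi$ is satisfiable, $f_\phi$ attains $-1$ at $\widehat{\mathrm{Enc}(\vy^\star)}$; if not, $f_\phi\equiv 0$. So any $\eps$-optimizer with a constant $\eps<1$ returns a point of value $\le \eps-1<0$ in the first case — which by construction decodes to an explicit satisfying assignment — and a point of value $0$ in the second; checking the sign decides $3$SAT. Taking $\eps=\tfrac12$ proves the theorem.

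The step I expect to be the main obstacle is exactly the Lipschitz/continuity guarantee at the \emph{seams} of the decoding map — the places where $\mathrm{sign}(\vx)$ or the decoder's output jumps. This is the whole reason for introducing the error-correcting code: it forces every well to be supported strictly inside the Voronoi cell of its codeword, so $f_\phi$ is locally a single clean cone and the seam-jumps happen only where $f_\phi$ is already identically $0$. Without the code — e.g. placing assignments at $\{-1,1\}^n/\sqrt n$ and rounding coordinatewise — the nearest seam sits at distance $1/\sqrt n$ from each well, which would force either the Lipschitz constant or the optimization accuracy $\eps$ to degrade with $n$, defeating the ``constant $\eps$'' statement.
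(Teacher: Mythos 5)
Your proposal is correct and follows essentially the same approach as the paper's own proof: both reduce from 3SAT by using an error-correcting code to embed Boolean assignments as well-separated points in the ball, place a small linear well around each encoded satisfying assignment and zero elsewhere, and rely on efficient bounded-distance decoding to make $f$ polynomial-time computable while the sign of the returned value decides satisfiability. The paper packages the code-based embedding as the map $\phi$ (Appendix~\ref{app:eccs}), writes the well as $\min(0,\|\phi(\vy)-\vx\|-2\eps)$, and defers the seam-continuity discussion; your write-up normalizes the well depth to $-1$ and spells out the Hamming-to-Euclidean bookkeeping that makes the decoder's Voronoi cell contain the well — but these are presentation choices, not a different argument.
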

\begin{proof}
  Let $\phi : \{0, 1\}^d \rightarrow \bd{\mathbf{0}}{1}$ be a map such that $\phi$ is efficiently computable, $\|\phi(\vy) - \phi(\vy')\| \geq 4\eps$ and such that given $\vx \in \bd{\phi(\vy)}{2\eps}$, we can efficiently recover $\vy$. For a small enough absolute constant $\eps$, such maps can be easily constructed using error correcting codes and we defer details to Appendix~\ref{app:eccs}.

    We start with an an instance $I$ of 3SAT on $d$ variables, and define $f$ as follows.
  Given a point $\vx$, we first find a $\vy \in \{0, 1\}^d$, if any, such that $\vx \in \bd{\phi(\vy)}{2\eps}$. If no such $\vy$ exists, $f(\vx)$ is set to $0$.
  If such a $\vy$ exists, we interpret it as an assignment to the variables in the 3SAT instance $I$ and set $f(\vx)$ to be $\min(0, \|\phi(\vy) - \vx\| - 2\eps)$ if $\vy$ is a satisfying assignment to instance $I$, and to $0$ otherwise.

  It is clear from definition that $f$ as defined is efficiently computable. Moreover, the minimum attained value for $f$ is $-2\eps$ if $I$ is satsifiable, and $0$ otherwise. Since distinguishing between these two cases is NP-hard, so is $\eps$-optimization of $f$.
\end{proof}

We note that assuming the exponential time hypothesis, this implies the $\exp(\Omega(d))$ oracle complexity lower bound of Theorem~\ref{thm:no_fast_opt}.

\section{Relating Discrete and Continuous Settings}
\label{sec:discrete_to_cont}
For any function $h$ on the hypercube, we can construct a function on $f$ on $\Re^d$ such that optimization of $f$ and $h$ are reducible to each other, and similarly sampling from $f$ and $h$ are reducible to each other. This would allow us to use separation results for the hypercube case to establish analagous separation results for the continuous case.

\begin{figure*}
\begin{center}
\includegraphics[width=.48\textwidth]{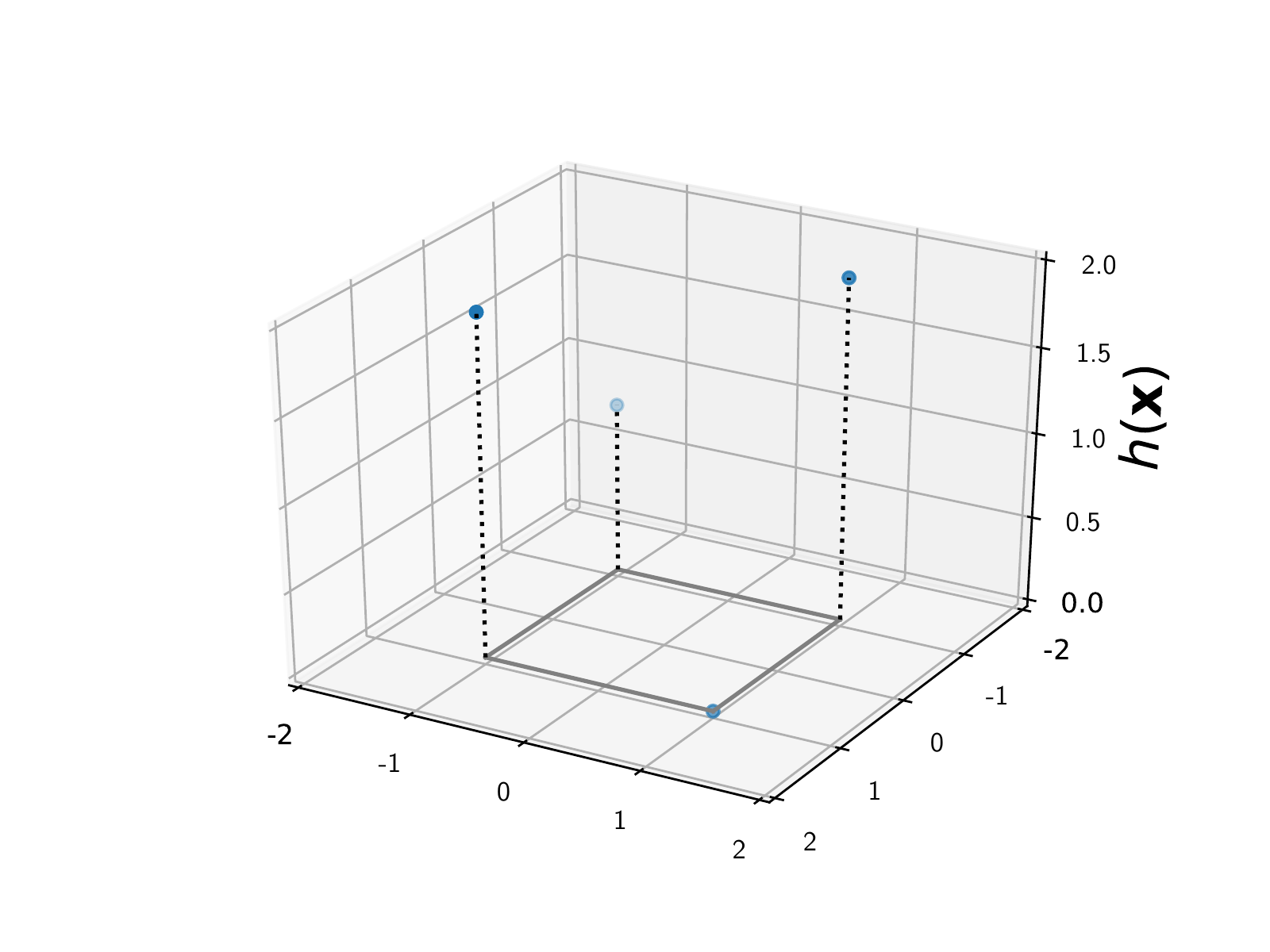}
\includegraphics[width=.48\textwidth]{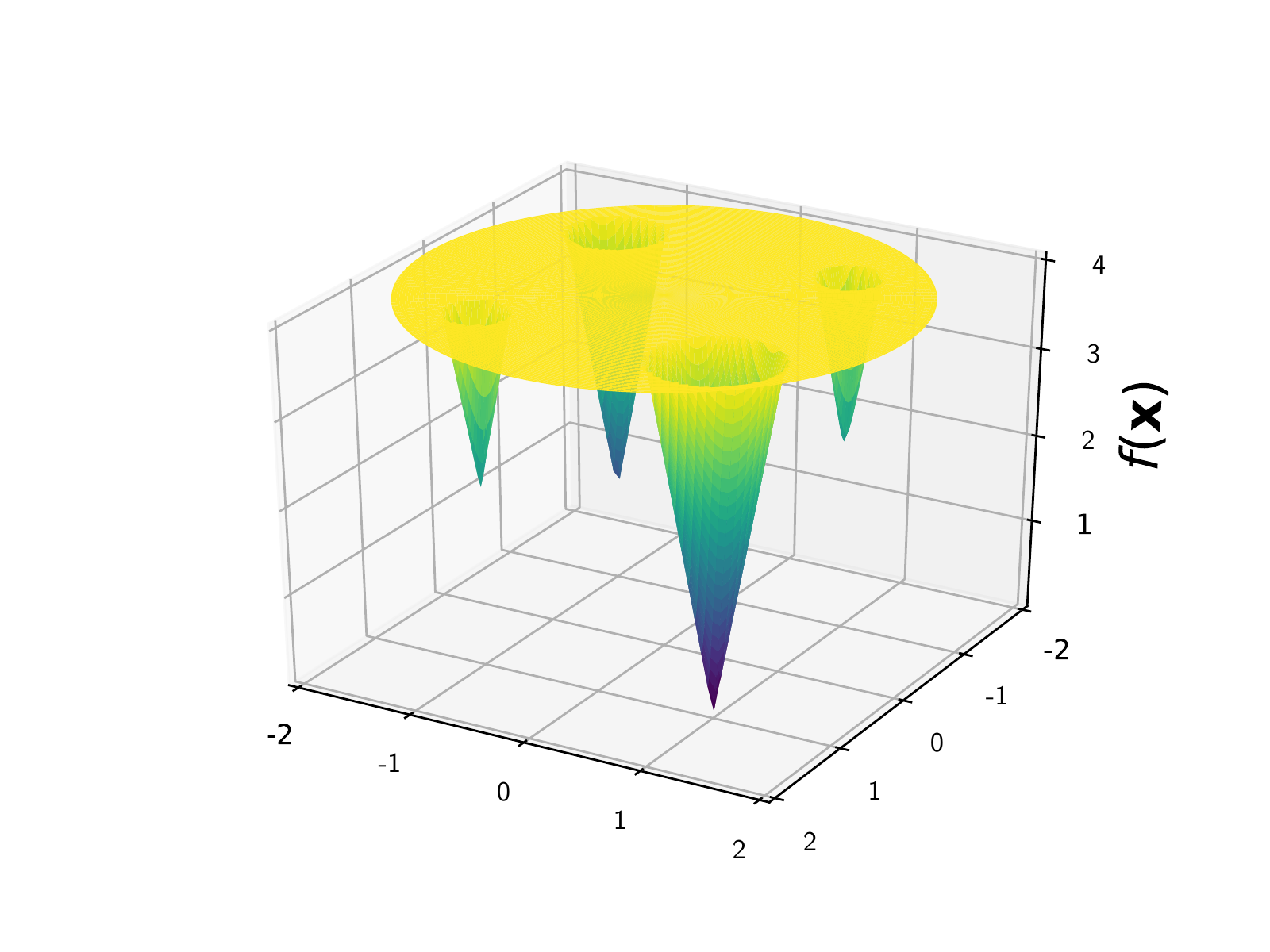}
\end{center}
\caption{(Left) An example of a function $h$ for $d=2$, with the $2$-d hypercube shown in gray and the values of $h$ denoted by blue points. (Right) The corresponding function $f$ that results from the transformation, for $M=4, R=2$.}
\label{fig:functions_h_f_new}
\end{figure*}
\begin{theorem}
  \label{thm:discrete_to_cont_v1}
  Let $h : \hypercube{d} \rightarrow \Re$ have range $[0, d]$. Fix $M \geq 2d, R \geq 2\sqrt{d}$. Then there is a funtion $f : \bdR \rightarrow \Re$ satisfying the following properties:
  \begin{description}
    \item[Efficiency:] Given $\vx \in \bdR$ and oracle access to $h$, $f$ can computed in polynomial time.
    \item[Lipschitzness:] $f$ is continuous and $L$-Lipschitz for $L= 2M$.
    \item[Sampler Equivalence:] Fix $\lambda \geq \frac{4d \ln 24R}{M}$. Given access to an $\eta$-sampler for $\Dee_h^{\lambda, \hypercube{d}}$, there is an efficient $\eta'$-sampler for $\deefl$, for $\eta' = \eta + \exp(-\Omega(d))$.
    Conversely, given access to an $\eta$-sampler for $\deefl$, there is an efficient $\eta'$-sampler for $\Dee_h^{\lambda, \hypercube{d}}$ for $\eta' = \eta + \exp(-\Omega(d))$.
  \end{description}
\end{theorem}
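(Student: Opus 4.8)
The plan is to construct $f$ by digging a conical potential well around every hypercube vertex, with the depth of the well at $\vy$ governed by $h(\vy)$; this is exactly the picture in Figure~\ref{fig:functions_h_f_new}. For $\vx\in\bdR$ let $r(\vx)=\mathrm{dist}(\vx,\hypercube{d})=\sqrt{\sum_i(|x_i|-1)^2}$, which is efficiently computable, $1$-Lipschitz, with nearest vertex $\round(\vx)=\mathrm{sign}(\vx)$ so that $r(\vx)=\|\vx-\round(\vx)\|$; let $\psi:[0,\infty)\to[0,1]$ be the piecewise-linear bump equal to $1$ on $[0,1/2]$, equal to $0$ on $[1,\infty)$, and linear in between; and define
\[
  f(\vx)\;=\;M\cdot r(\vx)\;+\;h(\round(\vx))\cdot\psi(r(\vx)).
\]
Inside $\bd{\vy}{1/2}$, which lies in $\vy$'s orthant, this is the shifted cone $h(\vy)+M\|\vx-\vy\|$. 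The term $h(\round(\vx))\psi(r(\vx))$ is continuous because it vanishes on $\{r\ge 1\}$, which contains the jump set of $\round$, and is $2d$-Lipschitz because $|h|\le d$, $\psi$ is $2$-Lipschitz, and $\round$ is locally constant wherever $\psi\ne 0$; hence $f$ is continuous and $(M+2d)\le 2M$-Lipschitz, and computable in polynomial time with one call to $h$. That disposes of \textbf{Efficiency} and \textbf{Lipschitzness}.

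The one computation that drives \textbf{Sampler Equivalence} is that the mass $\deefl$ places on the well at $\vy$ is proportional to $e^{-\lambda h(\vy)}$ with a constant independent of $\vy$: with $C:=\int_{\bd{\mathbf{0}}{1/2}}e^{-\lambda M\|\vu\|}\,\ud\vu$,
\[
  \int_{\bd{\vy}{1/2}}e^{-\lambda f(\vx)}\,\ud\vx\;=\;e^{-\lambda h(\vy)}\,C,
\]
because the radial integral is translation invariant. Writing $Z_{\cup}$ for the mass $\deefl$ puts on $\bigcup_\vy\bd{\vy}{1/2}$, this gives $Z_{\cup}=C\cdot Z_h^{\lambda,\hypercube{d}}$; so conditioned on landing in this union, a draw $\vx\sim\deefl$ has $\round(\vx)=\vy$ with probability exactly $\Dee_h^{\lambda,\hypercube{d}}(\vy)$, and the displacement $\vx-\vy$ is then distributed (independently of $\vy$) with radius drawn from the density $\propto e^{-\lambda M t}t^{d-1}$ on $[0,1/2]$ and a uniformly random direction. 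Both reductions now follow, up to the junk mass $J:=Z_f^{\lambda,\domain}-Z_{\cup}$ that $\deefl$ places outside the balls. Forward: draw $\vx$ from the given $\eta$-sampler for $\deefl$ and output $\round(\vx)$; using $Z_{\cup}=Z_f^{\lambda,\domain}-J$, its law is within total variation $\eta+J/Z_f^{\lambda,\domain}$ of $\Dee_h^{\lambda,\hypercube{d}}$. Backward: draw $\vy$ from the $\eta$-sampler for $\Dee_h^{\lambda,\hypercube{d}}$, draw $t\in[0,1/2]$ from the density $\propto e^{-\lambda M t}t^{d-1}$, draw a uniformly random unit vector $\vu$, and output $\vx=\vy+t\vu\in\bd{\mathbf{0}}{\sqrt d+1/2}\subseteq\bdR$; the output law is exactly $\deefl$ restricted to and renormalized on $\bigcup_\vy\bd{\vy}{1/2}$, hence within $\eta+J/Z_f^{\lambda,\domain}$ of $\deefl$.

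What remains is to show $J/Z_f^{\lambda,\domain}=\exp(-\Omega(d))$, and this is the only place the hypotheses $M\ge 2d$, $R\ge 2\sqrt d$, and $\lambda\ge\frac{4d\ln 24R}{M}$ get used; I expect it to be the main obstacle. The key point is that on $\vy$'s orthant the map $r\mapsto Mr+h(\vy)\psi(r)$ is nondecreasing for $r\ge 1/2$ --- its derivative is $M-2h(\vy)\ge M-2d\ge 0$ --- so $f(\vx)\ge M/2+h(\vy)$ on $\vy$'s orthant outside $\bd{\vy}{1/2}$; hence the junk there has mass at most $e^{-\lambda(M/2+h(\vy))}\cdot\vol(\bdR)/2^d$, whereas the well at $\vy$ contributes $e^{-\lambda h(\vy)}C$ with $C=\Theta(S_{d-1}\Gamma(d)/(\lambda M)^d)$ (truncating the cone at radius $1/2$ costs only a $1+e^{-\Omega(\lambda M)}$ factor, since the radial integrand peaks at $t\approx d/\lambda M\ll 1/2$). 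Dividing, the per-vertex junk-to-well ratio is $O((e\lambda M/d)^d\,e^{-\lambda M/2})$, and $\lambda M\ge 4d\ln 24R$ together with $R\ge 2\sqrt d$ forces $\lambda M/2$ to dominate $d\ln(\lambda M/d)$, making this ratio $\exp(-\Omega(d))$; summing over the $2^d$ vertices and folding in the negligible $r\ge 1$ tails gives $J/Z_f^{\lambda,\domain}=\exp(-\Omega(d))$, and substituting into the reductions yields $\eta'=\eta+\exp(-\Omega(d))$. The residual work is careful incomplete-Gamma bookkeeping with the stated constants, plus the standard passage to finite precision and Wasserstein distance in the spirit of~\cite{ToshD19}.
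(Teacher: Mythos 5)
Your proof is correct, and it uses a genuinely different (and in one respect cleaner) construction than the paper's. The paper defines $f$ by letting the cone at each vertex rise at slope $\slope = 2M$ and then \emph{capping} at the constant $M$; the transition radius $\frac{M - h(\round(\vx))}{2M}$ then depends on $h(\round(\vx))$, which is why the paper must compare the integral over $\bd{\vy}{1/2}$ to that over $\bd{\vy}{1/4}$ (where the cone is guaranteed to be active) via an incomplete-Gamma estimate (Lemma~\ref{lem:incomplete_gamma}). Your construction instead lets the term $M\,r(\vx)$ grow unimpeded and phases out the $h$-contribution via the bump $\psi$, so that inside every $\bd{\vy}{1/2}$ one has \emph{exactly} $f = h(\vy) + M\|\vx-\vy\|$; this makes the well mass literally $e^{-\lambda h(\vy)}\,C$ with a $\vy$-independent $C$, eliminating the radius-comparison lemma. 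You pay for this with an $f$ that is no longer bounded by $M$, and with a weaker pointwise suppression of the junk region ($f \ge M/2 + h(\vy)$ at the well's edge rather than $f = M$), but since $\lambda M \geq 4d\ln 24R$ the bound $e^{-\lambda M/2}\bigl(\tfrac{eR\lambda M}{2d}\bigr)^d = e^{-\Omega(d)}$ still goes through. Two small slips worth tidying: your displayed per-vertex ratio $O\bigl((e\lambda M/d)^d e^{-\lambda M/2}\bigr)$ drops the $(R/2)^d$ factor (you clearly account for it a sentence later, so this is only a typo); and the final ``folding in the negligible $r\ge 1$ tails'' clause is superfluous --- your orthant bound already covers every point of $\vy$'s orthant outside $\bd{\vy}{1/2}$, including $r\ge1$, so no separate treatment is needed.
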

\begin{proof}
  The function $f$ is fairly natural: it takes a large value $M \geq 2d$ at most points, except in small balls around the hypercube vertices. At each hypercube vertex, $f$ is equal to the $h$ value at the vertex, and we interpolate linearly in a small ball. See Figure~\ref{fig:functions_h_f} for an illustration.

Formally, let $\round : \Re \rightarrow \{-1, 1\}$ be the function that takes the value  $1$ for $x \geq 0$ and $-1$ otherwise, and let $\round : \Re^d \rightarrow \hypercube{d}$ be its natural vectorized form that applies the function coordinate-wise.
Let $g(\vx) = \|\vx - \round(\vx)\|$ denote the Euclidean distance from $\vx$ to $\round(\vx)$. Let $\slope = 2M$.
The function $f$ is defined as follows:
  \begin{align*}
    f(\vx) &= \left\{\begin{array}{ll}
                          h(\round(\vx)) + \slope\cdot g(\vx) & \mbox{ if }  g(\vx) \leq \frac{M - h(\round(\vx))}{\slope}\\
                          M & \mbox{ if } g(\vx) \geq \frac{M - h(\round(\vx))}{\slope}\\
                     \end{array}
              \right.
  \end{align*}
  It is easy to verify that $f$ is continuous. Moreover, since $M \geq 2d$, and $h$ has range $[0,d]$, the value $\frac{M - h(\round(\vx))}{\slope}$ is in the range $[\frac 1 4, \frac 1 2]$. It follows that $f$ takes the value $M$ outside balls of radius $\frac 1 2$ around the hypercube vertices, and is strictly smaller than $M$ in balls for radius $\frac 1 4$.

  Since $\round(\vx)$ is easy to compute, this implies that $f$ can be computed in polynomial time, using a single oracle call to $h$.
  Moreover it is immediate from the definition that the function $f$ has Lipschitz constant $s$.

  Note that $f(\vy) = h(\vy)$ for $\vy \in \hypercube{d}$ and $f(\vx) \geq h(\round(\vx))$ for all $\vx \in \bdR$. This implies that the minimum value of $f$ is the same as the minimum value of $h$, and indeed any ($\eps$-)minimizer $\vy$ of $h$ also ($\eps$-)minimizes $f$. Conversely, let $\vx$ be an $\eps$-minimizer of $f$. Since $h(\round(\vx)) \leq f(\vx)$, it follows that $\round(\vx)$ is an $\eps$-minimizer of $h$.

Finally we prove the equivalence of approximate sampling. Towards that goal, we define an intermediate distribution on $\bdR$. Let $\widehat{\deefl}$ be the distribution $\deefl$ conditioned on being in $\cup_{\vy \in \hypercube{d}} \bd{\vy}{\frac 1 4}$.

We first argue that $\eta$-samplability of $\widehat{\deefl}$ is equivalent to $\eta$-samplability of $\Dee_h^{\lambda, \hypercube{d}}$. Suppose that $X$ is a sample from $\widehat{\deefl}$. Then for any $\vy^\star \in \hypercube{d}$,
\begin{align*}
\Pr[X \in \bd{\vy^\star}{\frac 1 4}] &= \frac{\int_{\bd{\vy^\star}{\frac 1 4}} \exp(-\lambda f(\vx))\;\ud \vx}{ \sum_{\vy \in \hypercube{d}}\int_{\bd{\vy}{\frac 1 4}} \exp(-\lambda f(\vx))\;\ud \vx}\\
&= \frac{\exp(-\lambda h(\vy^\star)) \cdot \int_{\bd{\vy^\star}{\frac 1 4}} \exp(-\slope\lambda g(\vx)) \;\ud \vx}{\sum_{\vy \in \hypercube{d}} \exp(-\lambda h(\vy)) \cdot \int_{\bd{\vy}{\frac 1 4}} \exp(-\slope\lambda g(\vx)) \;\ud \vx}\\
&= \frac{\exp(-\lambda h(\vy^\star))}{\sum_{\vy \in \hypercube{d}}\exp(-\lambda h(\vy))}
\end{align*}
Thus $\round(X)$ is a sample from $\Dee_h^{\lambda, \hypercube{d}}$.
Conversely, the same calculation implies that given a sample $Y$ from $\Dee_h^{\lambda, \hypercube{d}}$, and a vector $Z \in \bd{\mathbf{0}}{\frac 1 4}$ sampled proportional to $\exp(-s\lambda \|\vz\|)$, $Y + Z$ is a sample from $\widehat{\deefl}$. Noting that $Z$ is a sample from a efficiently sample-able log-concave distribution completes the equivalence.

We next argue that $\deefl$ and $\widehat{\deefl}$ are $\exp(-\Omega(d))$-close as distributions. Since $\widehat{\deefl}$ is a conditioning of $\deefl$, this is equivalent to showing that nearly all of the mass of $\deefl$ lies in $\cup_{\vy \in \hypercube{d}} \bd{\vy}{\frac 1 4}$.
We write
\begin{align}
  Z_{\lambda}^f &= \int_{\bdR} \exp(-\lambda f(\vx)) \;\ud \vx\nonumber\\
  &\geq \sum_{\vy \in \hypercube{d}} \int_{\bd{\vy}{\frac 1 4}} \exp(-\lambda f(\vx)) \;\ud \vx \nonumber\\
  &= \sum_{\vy \in \hypercube{d}} \exp(-\lambda h(\vy)) \cdot \int_{\bd{\vy}{\frac 1 4}} \exp(-\slope\lambda g(\vx)) \;\ud \vx\nonumber\\
\end{align}
Let $\widehat{Z_{\lambda}^f}$ denote this last expression. We will argue that $Z_{\lambda}^f \leq (1+\exp(-\Omega(d)))\widehat{Z_{\lambda}^f}$. We write $Z_{\lambda}^f$ as
\begin{align*}
  \int_{\bdR} \exp(-\lambda f(\vx)) \;\ud \vx
  &\leq \sum_{\vy \in \hypercube{d}} \int_{\bd{\vy}{\frac 1 2}} \exp(-\lambda f(\vx)) \;\ud \vx + \int_{\bdR} \exp(-\lambda M) \;\ud\vx \\
    &\leq \underbrace{\sum_{\vy \in \hypercube{d}} \exp(-\lambda h(\vy))\cdot \int_{{\bd{\vy}{\frac 1 2}}} \exp(-\slope\lambda g(\vx)) \;\ud \vx}_{\text{(A)}} + \underbrace{\int_{\bdR} \exp(-\lambda M) \;\ud\vx}_{\text{(B)}} \\
\end{align*}
A simple calculation, formalized as Lemma~\ref{lem:incomplete_gamma} in Appendix~\ref{app:incomplete_gamma} shows that the integral $\int_{{\bd{\vy}{\frac 1 2}}} \exp(-\slope\lambda g(\vx))\;\ud\vx$ is within $(1+ 2\exp(-d))$ of $\int_{{\bd{\vy}{\frac 1 4}}} \exp(-\slope\lambda g(\vx))\;\ud\vx$ for $\slope\lambda > 16d$.
This implies that the term $(A)$ above is at most $(1+2\exp(-d))\widehat{Z_{\lambda}^f}$. To bound the second term $(B)$ above, we argue that a ball of radius $\frac 1 8$ around any single vertex $\vy$ of the hypercube contributes significantly more than than the term $(B)$. Indeed
\begin{align*}
  \int_{\bd{\vy}{\frac 1 8}} \exp(-\lambda f(\vx)) \;\ud \vx &\geq \int_{\bd{\vy}{\frac 1 8}} \exp(-\lambda (h(\vy) + \frac{\slope}{8})) \;\ud \vx\\
  &\geq \exp(-\lambda (d+\frac{M}{4})) \cdot (\frac 1 8)^d \int_{\bd{\mathbf{0}}{1}} \;\ud \vx
  &\geq \exp(-\lambda (\frac{3M}{4})) \cdot (\frac 1 8)^d \int_{\bd{\mathbf{0}}{1}} \;\ud \vx
\end{align*}
Whereas,
\begin{align*}
  \int_{\bdR} \exp(-\lambda M) \;\ud\vx &= \exp(-\lambda M) \cdot R^d \cdot \int_{\bd{\mathbf{0}}{1}} \;\ud \vx.
\end{align*}
Thus $(B)/\widehat{Z_{\lambda}^f}$ is at most $\exp(-\lambda M/4 + d \ln 8R)$.
 Under the assumptions on $\lambda$, it follows that (B) is at most $\exp(-\Omega(d))$ times  $\widehat{Z_\lambda^h}$. In other words, we have shown that $Z_{\lambda}^f$ is at most $(1+\exp(-\Omega(d))) \widehat{Z_{\lambda}^h}$.

\end{proof}
\begin{remark}
  \label{rem:sampling_wasserstein}
  The equivalence of sampling extends immediately to Wasserstein distance. Indeed given a sampler for $\Dee_h^{\lambda, \hypercube{d}}$, one gets a Wasserstein sampler for $\widehat{\deefl}$ by sampling from a simple isotropic log-concave distribution. A Wasserstein sampler for a ball suffices for this. Since $\mathcal{W}(P, Q)$ is bounded by the diameter times the statistical distance, this gives a $\eta + \exp(-\Omega(d))$ Wasserstein sampler for $\deefl$. Similarly, a $\eta$ Wasserstein sampler for $\deefl$ conditioned on the support of $\widehat{\deefl}$ immediately yields an $O(\eta)$-sampler for $\Dee_h^{\lambda, \hypercube{d}}$.
  Moreover, it is easy to check that this conditioning is on a constant probability event as long as $\eta < \frac 1 {16}$.
\end{remark}
\subsection{Optimization can be Easier than Sampling}
Given the reduction from the previous section, there are many options for a starting discrete problem to apply the reduction. We will start from one of the most celebrated NP-hard problems. The NP-hardness of Hamiltonian Cycle dates back  to~\citet{Karp72}.
\begin{theorem}[Hardness of {\sc HamCycle}]
  \label{thm:hamcycle_hardness}
  Given a constant-degree graph $G = (V, E)$, it is NP-hard to distinguish the following two cases.
  \begin{description}
    \item[{\sc Yes} Case:] $G$ has a Hamiltonian Cycle.
    \item[{\sc No} Case:] $G$ has no Hamiltonian Cycle.
  \end{description}
\end{theorem}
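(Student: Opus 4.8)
The plan is to reduce from the classical NP-hardness of the (undirected) Hamiltonian Cycle problem on \emph{arbitrary} graphs, established by \citet{Karp72}, and then post-process the instance into a bounded-degree one while preserving Hamiltonicity exactly. (Alternatively one could cite the sharper folklore fact that {\sc HamCycle} is already NP-hard on cubic -- indeed planar cubic -- graphs, but we need neither planarity nor a specific constant.) Since the predicate ``$G$ has a Hamiltonian cycle'' is already a clean yes/no property, no gap amplification is needed; we only need a polynomial-time many-one reduction that outputs a constant-degree graph $G'$ which is Hamiltonian if and only if $G$ is.

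\textbf{Degree reduction.} For each vertex $v$ of degree $k \geq 4$ with neighbors $u_1,\dots,u_k$, replace $v$ by a gadget $H_v$ on $O(k)$ vertices, each of internal degree at most $2$, with $k$ distinguished \emph{port} vertices $p_1^v,\dots,p_k^v$; the only edges leaving $H_v$ are $p_i^v u_i$ (and if $u_i$ also had high degree, this edge is $p_i^v p_{(v)}^{u_i}$, a port-to-port edge). Doing this for every vertex of degree $>3$ yields a graph $G'$ of maximum degree $3$ with $|V(G')| = O(|E(G)|)$, computable in polynomial time. The gadget must be engineered so that (i) for any Hamiltonian cycle $C$ of $G'$, the restriction $C \cap H_v$ is a single path that enters $H_v$ at one port, leaves at another, and visits \emph{every} vertex of $H_v$; and (ii) conversely, for \emph{every} ordered pair $i \neq j$ there is a Hamiltonian path of $H_v$ from $p_i^v$ to $p_j^v$. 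Granting this, contracting each $H_v$ back to $v$ gives a bijection between Hamiltonian cycles of $G'$ and of $G$: a cycle through $v$ using the original edges $vu_i,vu_j$ corresponds exactly to a cycle threading $H_v$ via the $p_i^v$-to-$p_j^v$ traversal. Hence $G$ is Hamiltonian iff $G'$ is, and $G'$ has degree $3$, which proves the theorem.

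\textbf{Main obstacle.} The delicate point is designing $H_v$ to meet properties (i) and (ii) \emph{simultaneously for all port pairs}: one-sided forcing is trivial (a single path forces ``enter one end, exit the other, cover all''), but then only two \emph{fixed} original edges at $v$ could ever be used by a Hamiltonian cycle, which would not preserve Hamiltonicity. The standard remedy is to assemble $H_v$ from $k$ constant-size ``choice'' pieces, one owning each port, arranged in a small internal cycle, where each piece can be traversed either in a ``pass-through'' mode or a ``detour-to-the-port'' mode, wired (via degree-preserving XOR-type sub-gadgets, or Tutte-fragment--style pieces) so that in any Hamiltonian traversal exactly two pieces are in detour mode; this is precisely the content of the classical cubic-Hamiltonicity reductions, and in a full write-up one would either cite such a construction or exhibit one explicit small piece and verify (i)--(ii) by case analysis. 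A cleaner route that avoids ad hoc gadgetry altogether is to reduce instead from bounded-occurrence $3$SAT (NP-hard by Tovey's expander-replacement argument), apply the textbook $3$SAT$\to$\emph{directed} Hamiltonian cycle ``diamond'' reduction -- in which clause nodes have degree $6$ and diamond nodes have degree $O(1)$ once each variable occurs $O(1)$ times -- and then use the standard in/mid/out vertex split to pass to the undirected setting, which at most doubles degrees. Either route yields the statement; the bounded-occurrence-$3$SAT route is the least error-prone to present in full.
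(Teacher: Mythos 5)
The paper does not actually prove this theorem; it states it as a known fact and cites \citet{Karp72}. (Karp's paper established NP-hardness of Hamiltonian Cycle on general graphs; the bounded-degree refinement is usually credited to the later Garey--Johnson--Tarjan line of work on planar cubic Hamiltonicity.) Your proposal therefore supplies an argument where the paper simply cites, so it cannot be said to ``match'' the paper's proof, but it can be assessed on its own merits.

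Of your two routes, the second one --- reduce from bounded-occurrence $3$SAT (Tovey), use the textbook ``diamond'' reduction to \emph{directed} Hamiltonian Cycle, and then pass to the undirected case by the in/mid/out vertex split --- is fully standard and would go through with no further invention; it is the right one to lean on. The first route, a direct degree-reduction from general Hamiltonian Cycle, has a genuine gap that you partly flag yourself: property (ii) demands a bounded-degree gadget $H_v$ that admits a Hamiltonian path between \emph{every} pair of its $k$ ports, with each port retaining internal degree at most $2$. It is not clear such a gadget exists for arbitrary $k$: small candidates (a cycle on the ports, a path-plus-spine, a tree of binary splitters) all fail to be Hamiltonian-connected across every port pair, and the sketch of ``XOR-type'' or ``Tutte-fragment'' pieces does not by itself certify (ii) --- one would need to exhibit and case-check an explicit gadget, and I do not believe the particular shape you describe works as stated. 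Since you explicitly offer the $3$SAT route as the ``least error-prone'' alternative, the overall proposal is correct in spirit, but the argument as presented carries its weight on the $3$SAT route; the direct degree-reduction paragraph is better read as motivation than as a complete reduction.
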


We can then amplify the gap between the number of long cycles in the two cases.
\begin{theorem}[{\sc \#Cycle} hardness]
  \label{thm:count_gap_amplification}
  Given a constant-degree graph $G = (V, E)$ and for $L= |V|/2$, it is NP-hard to distinguish the following two cases.
  \begin{description}
    \item[{\sc Yes} Case:] $G$ has at least $1 + 2^{L}$ simple cycles of length $L$.
    \item[{\sc No} Case:] $G$ has exactly one simple cycle $C^{(planted)}$ of length $L$, and no longer simple cycle.
  \end{description}
  Moreover, $C^{(planted)}$ can be efficiently found in polynomial time.
\end{theorem}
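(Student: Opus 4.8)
The plan is a gap‑amplifying reduction from the Hamiltonian‑cycle problem of Theorem~\ref{thm:hamcycle_hardness}. It has three ingredients: (i) pass from Hamiltonian \emph{cycles} to Hamiltonian \emph{$s$--$t$ paths}; (ii) attach to the path instance a constant‑degree ``multiplier'' gadget‑chain whose sole purpose is to turn a single long cycle into $2^{\Omega(|V|)}$ of them in the {\sc Yes} case while being useless in the {\sc No} case; and (iii) add a disjoint planted cycle that supplies the unique long cycle in the {\sc No} case. For step (i): given a constant‑degree graph $G$ whose Hamiltonicity is NP‑hard, pick an arbitrary vertex $v$, add a twin $v'$ with the same neighbourhood, and attach degree‑one pendants $s\sim v$ and $t\sim v'$; the resulting $H$ has constant degree and $G$ has a Hamiltonian cycle iff $H$ has a Hamiltonian $s$--$t$ path. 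Write $h=|V(H)|$; we may assume $h$ is large, since bounded instances are decided outright.

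For the construction, fix the constant $c=5$ and a parameter $k=\Theta(h)$ (concretely $k=2h$ works). Form a graph $M$ from $H$ by adding fresh port vertices $x_0=t,x_1,\dots,x_k$, and for each $i$ a set $I_i$ of $c$ fresh vertices forming a clique with every vertex of $I_i$ adjacent to both $x_{i-1}$ and $x_i$ (and no edge joining $x_{i-1}$ to $x_i$); finally add the edge $x_k\sim s$. Let $G'$ be the disjoint union of $M$ and a simple cycle $C_0$ on exactly $|V(M)|$ vertices. Then $|V(G')|=2|V(M)|$, so $L:=|V(G')|/2=|V(M)|=h+(c+1)k$, every vertex of $G'$ has bounded degree (each port sees only $I_i\cup I_{i+1}$), $G'$ is computable in polynomial time, and $C^{(planted)}:=C_0$ is available explicitly.

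For the {\sc Yes} case: if $H$ has a Hamiltonian $s$--$t$ path, then $M$ has a Hamiltonian cycle obtained by following such a path and then ``snaking'' monotonically $x_0\to I_1\to x_1\to\cdots\to I_k\to x_k\to s$; inside gadget $i$ the snake may visit $I_i$ in any of the $c!$ orders, so $M$ has at least $(c!)^k$ Hamiltonian cycles, each of length $L$, and with $C_0$ this yields at least $1+(c!)^k=1+120^{\,k}\ge 1+2^{L}$ simple cycles of length $L$ (since $2k\log_2 120>L/h\cdot h = h+6k$ for $k=2h$). For the {\sc No} case: $M$ and $C_0$ lie in different components, so every simple cycle of $G'$ is inside one of them; $C_0$ contributes exactly one simple cycle (length $L$, with nothing longer), and since $|V(M)|=L$ any cycle of $M$ of length $\ge L$ is a Hamiltonian cycle of $M$. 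But $H$ meets the rest of $M$ only through $s$ and $t$, all internal vertices of $H$ have their neighbours inside $H$, and each clique $I_i$ is reachable from the rest of $M$ only via $x_{i-1}$ and $x_i$; hence a Hamiltonian cycle of $M$ is forced to traverse the gadget‑chain monotonically and to induce a single path covering $V(H)$ with endpoints in $\{s,t\}$, i.e.\ a Hamiltonian $s$--$t$ path of $H$ — which does not exist in the {\sc No} case. So $M$ has no cycle of length $\ge L$, and $G'$ has exactly one simple cycle of length $L$ (namely $C_0$) and none longer. Distinguishing the two cases therefore decides Hamiltonicity of $G$, giving NP‑hardness.

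The main obstacle is the joint calibration in step (ii)--(iii): the planted cycle must have length \emph{exactly} $|V(G')|/2$, yet the {\sc Yes} case must contain $\ge 2^{|V(G')|/2}$ cycles of that length. A gadget that merely doubles the solution count per $O(1)$ new vertices is \emph{not} enough, since that produces only $\approx 2^{|V|/2}$ long cycles, not more; one needs a gadget whose per‑added‑vertex multiplicity strictly exceeds $2$, which is exactly why a clique of size $c\ge 5$ is used (so that $c!>2^{\,c+1}$, making $(c!)^k$ outgrow $2^{\,h+(c+1)k}$ once $k=\Theta(h)$). The second delicate point is the rigidity claim used in the {\sc No} case — that the chain cannot be traversed in any partial or non‑monotone way that would manufacture a long cycle in $M$ when $H$ is non‑Hamiltonian — which must be checked from the local structure of the ports and cliques as sketched above; everything else (degree bounds, polynomial time, the explicit description of $C^{(planted)}$) is routine.
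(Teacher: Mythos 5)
Your proof is correct and uses a genuinely different gadget than the paper's. The paper reduces directly from Hamiltonian Cycle: it replaces every edge of $G_1$ by a path of $t$ edges in which each path-edge is duplicated (a multigraph with parallel edges), so a Hamiltonian cycle of $G_1$ lifts to $2^{nt}$ simple cycles of length $L=nt$ in $G'$; a disjoint $L$-cycle supplies $C^{(planted)}$. You instead go via Hamiltonian $s$--$t$ path, attach a chain of $k$ copies of $K_5$ strung between port vertices so that one Hamiltonian $s$--$t$ path of $H$ lifts to $(5!)^k$ Hamiltonian cycles of $M$, and plant a disjoint $|V(M)|$-cycle. Both amplify the count in essentially the same spirit, but your version stays within simple graphs (no parallel edges), and it is the one that actually achieves the stipulation $L=|V|/2$ literally: with the paper's edge-duplication gadget one has $|V(G')| = n + |E|(t-1) + nt$ while $L=nt$, so for any constant-degree $G_1$ with $|E|>n$ one gets $L<|V(G')|/2$. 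The paper thus only delivers $L=\Theta(|V|)$, which is all its downstream use (Theorem~\ref{thm:cycle_h}, needing $L=\Omega(d)$) requires; your clique-chain and matching plant make the $|V|/2$ claim exact. Your NO-case rigidity argument — $s\sim x_k$ forced by $\deg(s)=2$, then a cut/parity argument at each $x_{i-1}$--$I_i$ boundary forcing one crossing, hence a single $s$--$t$ path on $V(H)$ — is sound. One small remark: your aside that "merely doubling the count per $O(1)$ new vertices is not enough" slightly misdiagnoses the issue; the paper's per-added-vertex multiplicity already exceeds $2$, and what prevents it from reaching $L=|V|/2$ is rather that the internal vertices of \emph{unused} edge-paths and the plant contribute vertices with no compensating amplification.
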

The proof of the above uses a simple extension of a relatively standard reduction (see e.g.~\citet{Vadhan02}) from Hamiltonian Cycle. Starting with an instance $G_1$ of Hamiltonian Cycle, we replace each edge by a two-connected path of length $t$, for some integer $t$. For $L=nt$, this gives us $2^{L}$ cycles of length $L$ for every Hamiltonian cycle in $G_1$. Moreover, any simple cycle of length $L$ must correspond to a Hamiltonian Cycle in $G_1$. We add to $G$ a simple cycle of length $L$ on a separate set of $L$ vertices. This ``planted'' cycle is easy to find, since it forms its own connected component of size $L$. A full proof is deferred to Appendix~\ref{app:count_gap_amplification}.

Armed with these, we form a function on the hypercube in $d = |E'|$ dimensions such that optimizing it is easy, but sampling is hard.
\begin{theorem}
  \label{thm:cycle_h}
  There exists a function $h : \hypercube{d} \rightarrow [0, d]$ satisfying the following properties.
  \begin{description}
    \item[Efficiency:] $h$ can be computed efficiently on $\hypercube{d}$.
    \item[Easy Optimization:] One can efficiently find a particular minimizer $\vy^{(planted)}$ of $h$ on $\hypercube{d}$.
    \item[Sampling is hard:] Let $\lambda \geq 2d$. It is NP-hard to distinguish the following two cases, for $L = \Omega(d)$:
    \begin{description}
      \item[{\sc Yes} Case:] $\Pr_{\vy \sim \Dee_h^{\lambda, \hypercube{d}}}[\vy = \vy^{(planted)}] \leq \frac 1 {2^L}$
      \item[{\sc No} Case:] $\Pr_{\vy \sim \Dee_h^{\lambda, \hypercube{d}}}[\vy = \vy^{(planted)}] \geq 1 - \frac 1 {2^L}$
    \end{description}
    In particular this implies that $1-\frac{1}{2^{L-2}}$-sampling from $\Dee_h^{\lambda, \hypercube{d}}$ is NP-hard.
  \end{description}
\end{theorem}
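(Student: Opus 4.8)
The plan is to let hypercube vertices encode edge-subsets of the graph $G=(V,E)$ produced by Theorem~\ref{thm:count_gap_amplification}, and to take $h$ to be the indicator of ``not a simple cycle of length exactly $L$''. Concretely, with $d=|E|$, identify $\vy\in\hypercube{d}$ with $S_\vy=\{e : \vy_e=1\}$ and set $h(\vy)=0$ if $S_\vy$ is a simple cycle of length exactly $L$ in $G$, and $h(\vy)=1$ otherwise. This predicate is testable in polynomial time, so $h$ is efficiently computable with range $\{0,1\}\subseteq[0,d]$. Its minimizers are exactly the indicators of length-$L$ simple cycles; the planted cycle $C^{(planted)}$ of Theorem~\ref{thm:count_gap_amplification} is one such cycle and is efficiently findable, so its indicator $\vy^{(planted)}$ witnesses the Easy Optimization property. (Using the predicate ``length exactly $L$'' rather than ``length $\ge L$'' is what makes $\vy^{(planted)}$ the \emph{unique} minimizer in the {\sc No} case.)

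For the sampling dichotomy I would compute the partition function directly. Writing $N_L$ for the number of length-$L$ simple cycles of $G$, every non-minimizer contributes Boltzmann weight $e^{-\lambda}$, so $Z_h^{\lambda,\hypercube{d}}=N_L+(2^d-N_L)e^{-\lambda}$ and $\Pr_{\vy\sim\Dee_h^{\lambda,\hypercube{d}}}[\vy=\vy^{(planted)}]=1/Z_h^{\lambda,\hypercube{d}}$. Since $G$ has minimum degree at least $2$ we have $d=|E|\ge|V|\ge L$, and for $\lambda\ge 2d$ this gives $2^d e^{-\lambda}\le 2^d e^{-2d}\le 2^{-d}\le 2^{-L}$ (the middle inequality is just $2^{2d}\le e^{2d}$). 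In the {\sc No} case $N_L=1$, so the probability is $1/(1+(2^d-1)e^{-\lambda})\ge 1/(1+2^{-L})\ge 1-2^{-L}$; in the {\sc Yes} case $N_L\ge 1+2^L$, so the probability is at most $1/N_L\le 1/(1+2^L)\le 2^{-L}$. This is precisely the stated gap, and the stated $\lambda\ge 2d$ is exactly the amount of ``inverse temperature'' needed to crush every non-cycle vertex below the $2^{-L}$ threshold.

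NP-hardness of sampling then follows because the two target distributions are nearly singular. Distinguishing the {\sc Yes} and {\sc No} cases of Theorem~\ref{thm:count_gap_amplification} is NP-hard (it reduces from {\sc HamCycle}, Theorem~\ref{thm:hamcycle_hardness}), and since $\Dee_h^{\lambda,\hypercube{d}}$ puts mass on the efficiently computable point $\vy^{(planted)}$ that is $\ge 1-2^{-L}$ in one case and $\le 2^{-L}$ in the other, any algorithm that samples within statistical distance $\eta$ of $\Dee_h^{\lambda,\hypercube{d}}$ lets us decide the instance (answer ``{\sc No}'' iff the sample equals $\vy^{(planted)}$) as soon as $\eta$ is sufficiently below $\frac{1}{2}$; the precise constant claimed in the statement falls out of this accounting together with the observation that the two targets are at statistical distance essentially $1$.

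I do not expect a genuinely hard step here: essentially all of the real content lives in the combinatorial gap amplification of Theorem~\ref{thm:count_gap_amplification}, which simultaneously blows the number of equal-cost optima up to $2^L$ in the {\sc Yes} case and keeps a single, explicitly findable optimum in the {\sc No} case. Given that, the only delicate points are the bookkeeping ones flagged above: checking that ``length exactly $L$'' yields a unique {\sc No}-case minimizer, and verifying that $\lambda\ge2d$ is at once large enough to suppress all non-cycle vertices below $2^{-L}$ and consistent with $L=\Omega(d)$, which is where $L\le d$ (i.e.\ $|V|\le|E|$ for our min-degree-$2$ graph) is used.
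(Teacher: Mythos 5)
Your proof is correct and takes essentially the same route as the paper's, with one small and arguably cleaner modification: you define $h$ as the $\{0,1\}$-indicator of ``not a simple cycle of length exactly $L$'', whereas the paper sets $h(\vy)=d-h_1(\vy)$ with $h_1$ the cycle length (so the paper's $h$ is multi-valued). Your choice flattens the partition function to $Z=N_L+(2^d-N_L)e^{-\lambda}$, which makes the {\sc No}-case bound a one-line computation, while the paper has to note separately that every non-optimal $\vy$ has $h(\vy)-h(\vy^{(planted)})\geq 1$ to get the same $1+2^d e^{-\lambda}$ control. Both definitions feed into Theorem~\ref{thm:discrete_to_cont_v1} equally well. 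Your parenthetical about ``length exactly $L$'' vs.\ ``length $\geq L$'' is harmless but unnecessary: the {\sc No} case of Theorem~\ref{thm:count_gap_amplification} already forbids longer simple cycles, so either predicate gives a unique minimizer.

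One point worth flagging: the theorem's closing claim is that $\bigl(1-\tfrac{1}{2^{L-2}}\bigr)$-sampling is NP-hard, i.e.\ hardness persists even for samplers with total-variation error very close to $1$. Your distinguisher (``answer {\sc No} iff the sample equals $\vy^{(planted)}$'') only directly yields hardness for $\eta$ bounded below $\tfrac{1}{2}$, and you acknowledge you haven't chased the precise constant. The stronger threshold needs a slightly different accounting than a single-sample majority test, since for $\eta$ near $1$ a TV-close distribution can in principle move almost all mass anywhere; you should either work out how the $1-\tfrac{1}{2^{L-2}}$ figure is obtained or state the weaker $\eta<\tfrac12-2^{-L}$ bound you actually proved. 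Everything else (efficiency, easy optimization via the planted component, the $L\leq d$ bookkeeping, and the use of $\lambda\geq 2d$ to crush the $2^d$ non-optimal vertices) matches the paper.
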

\begin{proof}
  Let $G = (V, E)$ a graph produced by Theorem~\ref{thm:count_gap_amplification} and let $d = |E|$.
  A vertex $\vy$ of the hypercube $\hypercube{d}$ is easily identified with a set $E_{\vy} \subset E$ consisting of the edges $\{e \in E : y_{e'} = 1\}$. The function $h_1(\vy)$ is equal to zero if $E_\vy$ does not define a simple cycle and is equal to the length of the cycle otherwise. To convert this into a minimization problem, we define $h(\vy) = d - h_1(\vy)$. It is immediate that a minimizer of $h$ corresponds to a longest simple cycle in $G$.

  Given a vertex $\vy$, testing whether $E_{\vy}$ is a simple cycle can done efficiently, and the length of the cycle is simply $|E_{\vy}|$. This implies that $h$ can be efficiently computed on $\hypercube{d}$. Further, since we can find the planted cycle in $G$ efficiently, we can efficiently construct a minimizer $\vy^{(planted)}$ of $h$.

  Suppose that $G$ has at least $(2^L+1)$ cycles of length $L$. In this case, the distribution $\Dee_h^{\lambda, \hypercube{d}}$ restricted to the minimizers is uniform, and thus the probability mass on a specific minimizer $\vy^{(planted)}$ is at most $\frac 1 {2^{L}+1}$. This also therefore upper bounds the probability mass on $\vy^{(planted)}$ in the 
  $\Dee_h^{\lambda, \hypercube{d}}$.

  On the other hand, suppose that planted cycle is the unique longest simple cycle in $G$. Thus the probability mass on $\vy^{(planted)}$ is at least $\exp(\lambda(d-L)) / Z_{h}^{\lambda, \hypercube{d}}$. Since every other cycle is of length at most $L-1$, and there are at most $2^d$ cycles, it follows that $\frac{Z_{h}^{\lambda, \hypercube{d}}}{\exp(\lambda(d-L))} \leq 1 + \frac{2^d}{\exp(\lambda)}$. For $\lambda \geq 2d$, this ratio is $(1+\exp(-d)) \leq (1 - \frac{1}{2^d})^{-1}$. The claim follows.
\end{proof}

We can now apply Theorem~\ref{thm:discrete_to_cont_v1} to derive the following result.
\begin{theorem}
  \label{thm:opt_easier}
  There is a family $\eff$ of functions $f : \bdR \rightarrow \Re$ such that the following hold.
  \begin{description}
  \item[Efficiency: ] Every $f \in \eff$ is computable in time $poly(d)$.
  \item[Easy Optimization: ] An optimizer of $f$ can be found in time $poly(d)$
  \item[Sampling is NP-hard: ] For $\lambda \geq 2d$ and $\eta < 1 - \exp(-\Omega(d))$, there is no efficient $\eta$-sampler for $\deefl$ unless $NP=RP$.
  \end{description}
\end{theorem}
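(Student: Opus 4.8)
The plan is to compose the two reductions already established: Theorem~\ref{thm:cycle_h} supplies a hypercube function $h$ that is easy to optimize but hard to sample from, and Theorem~\ref{thm:discrete_to_cont_v1} lifts such an $h$ to a Euclidean function $f$ while preserving both features. Concretely, I would start from the function $h : \hypercube{d} \to [0,d]$ of Theorem~\ref{thm:cycle_h} (note that its range is exactly the $[0,d]$ required by the lifting), fix $M = 2d$ and $R = 2\sqrt{d}$, and let $f : \bdR \to \Re$ be the function produced by Theorem~\ref{thm:discrete_to_cont_v1}. The family $\eff$ is the collection of all such $f$ as the graph $G$ (hence $h$) ranges over the instances produced by Theorem~\ref{thm:count_gap_amplification}.

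Two of the three claimed properties are then read off directly. \textbf{Efficiency}: the Efficiency clause of Theorem~\ref{thm:discrete_to_cont_v1} says $f$ is $poly(d)$-computable given an oracle for $h$, and $h$ is itself $poly(d)$-computable by Theorem~\ref{thm:cycle_h}, so $f$ is $poly(d)$-computable. \textbf{Easy optimization}: by the minimizer-preservation noted in the proof of Theorem~\ref{thm:discrete_to_cont_v1}, we have $f(\vy) = h(\vy)$ on hypercube vertices and $f(\vx) \geq h(\round(\vx))$ everywhere, so the planted vertex $\vy^{(planted)}$ --- which Theorem~\ref{thm:cycle_h} lets us compute in $poly(d)$ time and which minimizes $h$ --- is, viewed as a point of $\bdR$, a minimizer of $f$, found in $poly(d)$ time.

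For \textbf{sampling hardness} I would argue by contradiction. First check that the $\lambda$-regime is simultaneously compatible with both source results: for $\lambda \geq 2d$ the hypothesis $\lambda \geq \tfrac{4d \ln 24R}{M} = 2\ln(48\sqrt{d})$ of the Sampler Equivalence holds for all large $d$, and $\lambda \geq 2d$ is exactly the regime Theorem~\ref{thm:cycle_h} requires. Now suppose there were an efficient $\eta$-sampler for $\deefl$ with $\eta < 1 - \exp(-\Omega(d))$, the constant hidden in $\Omega$ to be fixed below. By the Sampler Equivalence of Theorem~\ref{thm:discrete_to_cont_v1} this yields an efficient $\eta'$-sampler for $\Dee_h^{\lambda, \hypercube{d}}$ with $\eta' = \eta + \exp(-\Omega(d))$, where now the $\Omega$ is a fixed constant coming from the reduction. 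Since $L = \Omega(d)$, the slack $\tfrac{1}{2^{L-2}}$ is also $\exp(-\Omega(d))$ with a fixed constant; choosing the constant in $\eta < 1 - \exp(-\Omega(d))$ small enough relative to these two fixed constants makes $\eta' < 1 - \tfrac{1}{2^{L-2}}$ for all large $d$. But an efficient $(1 - \tfrac{1}{2^{L-2}})$-sampler for $\Dee_h^{\lambda, \hypercube{d}}$ is NP-hard by Theorem~\ref{thm:cycle_h}, and since the sampler is a randomized polynomial-time algorithm, this forces $NP = RP$.

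The argument is essentially bookkeeping; the only step needing genuine care is tracking the three $\exp(-\Omega(d))$ quantities --- the reduction's additive loss, the planted-mass slack $2^{-(L-2)}$, and the allowed sampler error $\eta$ --- so that they compose in the right direction, and confirming a single choice of $\lambda$ works for both Theorem~\ref{thm:cycle_h} and the Sampler Equivalence. If one additionally wants the statement robust to bit-precision issues, Remark~\ref{rem:sampling_wasserstein} lets the same chain of reductions go through with Wasserstein distance replacing statistical distance, at the cost of only constant-factor changes in the error terms.
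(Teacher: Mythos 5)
Your proposal is correct and matches the paper's intended proof exactly: the paper's own ``proof'' of Theorem~\ref{thm:opt_easier} is the single sentence ``We can now apply Theorem~\ref{thm:discrete_to_cont_v1} to derive the following result,'' and your argument supplies precisely the bookkeeping that sentence elides---choosing $M=2d$, $R=2\sqrt{d}$ so the sampler-equivalence hypothesis $\lambda \geq \tfrac{4d\ln 24R}{M} = 2\ln(48\sqrt{d})$ is subsumed by $\lambda\geq 2d$, reading off efficiency and optimizability from the two lemmas, and composing the $\exp(-\Omega(d))$ error terms from the reduction with the $2^{-(L-2)}$ slack of Theorem~\ref{thm:cycle_h} to get the stated $\eta < 1-\exp(-\Omega(d))$ threshold. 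No gaps relative to what the paper itself proves.
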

\begin{remark}
  In the statement above, efficiently computable means that given a $t$-bit representation of $\vx$, one can compute $f(\vx)$ to $t$ bits of accuracy in time $poly(d, t)$.
\end{remark}
\begin{remark}
  The easy optimization result above can be considerably strengthened. We can ensure that $\mathbf{0}$ is the optimizer of $f$ and that except with negligible probability, gradient descent starting at a random point will converge to this minimizer. Further, one can ensure that all local minima are global and that $f$ is strict-saddle. Thus not only is the function easy to optimize given the representation, black box oracle access to $f$ and its gradients suffices to optimize $f$. We defer details to the Appendix~\ref{app:stronger_optimizability}.
\end{remark}
\begin{remark}
  The hardness of sampling holds also for Wasserstein distance $\frac 1 {16}$, given Remark~\ref{rem:sampling_wasserstein}.
\end{remark}

\section{A Sharp Threshold for $\lambda$}
\label{sec:sharp_threshold}
We start with the following threshold result for sampling from the Gibb's distribution on independent sets due to~\cite{Weitz06,SlyS12}.
\begin{theorem}
  For any $\Delta \geq 6$, there is a threshold $\lambda_c(\Delta) > 0$ such that the following are true.
  \begin{description}
    \item[FPRAS for small $\lambda$:] For any $\lambda < \lambda_c(\Delta)$, the problem of sampling independent sets with $\Pr[I] \propto \exp(\lambda|I|)$ on $\Delta$-regular graphs has a fully polynomial time approximation scheme.
    \item[NP-hard for large $\lambda$:] For any $\lambda >\lambda_c(\Delta)$, unless $NP=RP$, there is no fully polynomial time randomized approximation scheme for the problem of sampling independent sets with $\Pr[I] \propto \exp(\lambda|I|)$ on $\Delta$-regular graphs.
  \end{description}
\end{theorem}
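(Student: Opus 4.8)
The plan is to recognize that this statement \emph{is} the combination of Weitz's correlation-decay algorithm \cite{Weitz06} with the phase-coexistence hardness reduction of Sly and Sun \cite{SlyS12}, taking $\lambda_c(\Delta)$ to be (the logarithm of) the uniqueness threshold of the hard-core model on the infinite $(\Delta-1)$-ary tree --- equivalently, the value of $\lambda$ at which the tree recursion $R \mapsto e^{\lambda}(1+R)^{-(\Delta-1)}$ stops being a contraction at its fixed point. I would then prove the two halves by essentially disjoint arguments.

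\textbf{FPRAS for $\lambda < \lambda_c(\Delta)$ (Weitz).} First I would establish the self-avoiding-walk-tree identity: for a graph $G$ of maximum degree $\Delta$ and a vertex $v$, the marginal probability that $v$ lies in a hard-core sample on $G$ equals the root marginal of the hard-core distribution on the finite tree $T_{\mathrm{SAW}}(G,v)$ of self-avoiding walks out of $v$, with the boundary leaves pinned in or out according to the cyclic edge order used to break ties. Second, since this tree has branching at most $\Delta-1$, for $\lambda < \lambda_c(\Delta)$ the tree recursion is a strict contraction in a suitable ``message'' potential, so the influence of the depth-$\ell$ leaves on the root marginal decays like $\rho^{\ell}$ for some $\rho=\rho(\lambda,\Delta)<1$; truncating $T_{\mathrm{SAW}}$ at depth $\ell = O(\log(n/\eps))$ and evaluating the recursion computes the marginal to additive error $\eps/n$ in time $n^{O(1)}(\Delta-1)^{\ell}=\mathrm{poly}(n/\eps)$. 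Third I would turn these marginals into a sampler the standard way: fix an ordering $v_1,\dots,v_n$ and decide each $v_i$ according to its conditional marginal given the choices already made for $v_1,\dots,v_{i-1}$; each such conditional is again a hard-core marginal on $G$ with some vertices pinned (which only lowers degrees), computable to accuracy $\eps/n$ by the same procedure, so a telescoping bound gives total-variation error $\eps$. (The same conditionals multiply to an FPTAS for the partition function, which by self-reducibility also yields the FPRAS.)

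\textbf{NP-hardness for $\lambda > \lambda_c(\Delta)$ (Sly--Sun).} The crux is a second-moment computation on random $\Delta$-regular bipartite graphs: for $\lambda$ strictly above the tree threshold, with high probability over the graph the hard-core distribution is exponentially close to an equal mixture of two well-separated ``phases'' --- one with the independent set predominantly on the left part, one predominantly on the right --- and the majority side is a robust bit that can be read off efficiently. Given such a gadget, I would reduce from an NP-hard problem on random regular graphs (a bisection/{\sc MaxCut}-type problem, in the style of Sly's construction): wire many gadget copies together so that each gadget's phase label behaves like a Boolean variable consistent with the instance, and arrange that the global log-partition function --- hence the acceptance probability of any purported FPRAS, after Monte-Carlo estimation --- differs multiplicatively by a factor $1+\Omega(1)$ between {\sc Yes} and {\sc No} instances. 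An efficient randomized approximate sampler/counter would then decide the NP-hard problem with bounded error, giving $NP=RP$.

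\textbf{Main obstacle.} The algorithmic direction is, modulo the by-now standard contraction analysis of the tree recursion, essentially bookkeeping. The hardness direction is the real work: one needs sharp first- and second-moment estimates showing that the random bipartite gadget genuinely has two phases of equal weight, separated enough to serve as reliable bits; tight enough control on the fluctuations of $\log Z$ that many gadgets compose without the errors drowning the $\Omega(1)$ signal; and a treatment of the borderline case $\lambda=\lambda_c(\Delta)$ (excluded from both parts of the statement). This is exactly the technical heart of \cite{SlyS12}, far more involved than anything else in this paper, so in the write-up I would invoke that theorem rather than reprove it.
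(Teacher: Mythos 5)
The paper offers no proof of this statement; it is quoted verbatim as an imported result with a citation to \citet{Weitz06} and \citet{SlyS12}, which are precisely the two works you identify, with $\lambda_c(\Delta)$ the (log of the) hard-core uniqueness threshold on the $(\Delta-1)$-ary tree as you say. Your decision to invoke Sly--Sun for the hardness half rather than reprove it therefore matches the paper's treatment exactly, and your sketches of the Weitz correlation-decay FPRAS and the Sly--Sun phase-coexistence reduction are faithful summaries of those sources.
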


\begin{theorem}
  There is a family $\eff$ of functions $f : \bdR \rightarrow \Re$ such that the following hold.
  \begin{description}
  \item[Efficiency: ] Every $f \in \eff$ is computable in time $poly(d)$.
  \item[Sampling has a threshold:] There is a constant $\lambda_c > 0$ such that for any $\frac 1 d < \lambda < \lambda_c$, there is a $poly(d/\eta)$-time $\eta$-sampler from from the distribution $\deefl$. On the other hand, for $\lambda > \lambda_c$, there is a constant $\eta' >0$ such that no polynomial time algorithm $\eta'$-samples from $\deefl$ unless $NP=RP$.
\end{description}
\end{theorem}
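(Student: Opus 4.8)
The plan is to apply the discrete-to-continuous transfer of Theorem~\ref{thm:discrete_to_cont_v1} to a hypercube function $h$ encoding the hard-core (independent set) model, and then import the threshold of the cited Weitz--Sly theorem. Fix a degree $\Delta \geq 6$ as in that theorem and let $\lambda_c := \lambda_c(\Delta) > 0$. For a $\Delta$-regular graph $G = (V,E)$ on $n$ vertices, set $d = n$, identify $\vy \in \hypercube{d}$ with the set $S_\vy = \{v : y_v = 1\}$, let $\mathcal{I} \subseteq \hypercube{d}$ be the efficiently recognizable set of $\vy$ with $S_\vy$ independent, and define $h(\vy) = d - |S_\vy|$ for $\vy \in \mathcal{I}$. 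Then $h$ maps $\mathcal{I}$ into $[0,d]$, and $\Dee_{h,\mathcal{I}}^{\lambda}$ --- the Gibbs distribution over $\mathcal{I}$ --- has $\Pr[\vy] \propto \exp(\lambda|S_\vy|)$, i.e.\ it is precisely the hard-core model at activity $e^{\lambda}$. I will invoke a routine extension of Theorem~\ref{thm:discrete_to_cont_v1} in which $h$ is a \emph{partial} function on an efficiently recognizable subset $\mathcal{S} \subseteq \hypercube{d}$ (here $\mathcal{S} = \mathcal{I}$) and the continuous $f$ is set to the background value $M$ on every ball around a vertex outside $\mathcal{S}$ as well; the proof is unchanged, since in the estimate of $Z_\lambda^f$ those extra balls merely get folded into the background term $\int_{\bdR}\exp(-\lambda M)$. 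Taking $R = 3\sqrt{d}$ and $M = d^3$ keeps the Lipschitz constant $2M$ and $R$ polynomial, makes the sampler-equivalence hypothesis $\lambda \geq 4d\ln(24R)/M$ hold for every $\lambda > 1/d$ with room to spare, and keeps the background negligible ($\exp(-\Omega(d))$) exactly as in the proof of Theorem~\ref{thm:discrete_to_cont_v1}. Let $\eff = \{f_G\}$ be the resulting family; each $f_G$ is computable to $t$ bits in $poly(d,t)$ time, needing only $\round(\vx)$, an independence test in $G$, and one distance evaluation.

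\textbf{The easy direction.} Fix $1/d < \lambda < \lambda_c$. By the cited threshold theorem there is a fully polynomial approximate sampler for the hard-core model at activity $e^{\lambda}$ on $\Delta$-regular graphs (equivalently, by self-reducibility and Jerrum--Valiant--Vazirani, an FPRAS for the partition function), hence a $poly(d/\eta)$-time $\eta$-sampler for $\Dee_{h,\mathcal{I}}^{\lambda}$. Feeding it into the ``discrete $\Rightarrow$ continuous'' half of the extended Theorem~\ref{thm:discrete_to_cont_v1} --- draw a vertex, then add an independent draw from the log-concave radial law $\propto \exp(-\slope\lambda\|\vz\|)$ on $\bd{\mathbf{0}}{1/4}$, both $poly(d)$-time --- gives a $poly(d/\eta)$-time $(\eta + \exp(-\Omega(d)))$-sampler for $\deefl$; halving $\eta$ yields the claimed sampler, with the usual finite-precision / Wasserstein reading (Remark~\ref{rem:sampling_wasserstein}).

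\textbf{The hard direction.} Fix $\lambda > \lambda_c$ and suppose, for contradiction, that some poly-time algorithm $\eta'$-samples $\deefl$ for every $f \in \eff$, with $\eta' < 1/3$ a constant. The ``continuous $\Rightarrow$ discrete'' half of Theorem~\ref{thm:discrete_to_cont_v1} then yields a poly-time $(\eta' + \exp(-\Omega(d)))$-sampler for the hard-core Gibbs distribution on \emph{every} $\Delta$-regular graph. It remains to contradict $NP = RP$. The hardness half of the threshold theorem rests on a randomized reduction that maps an NP-hard problem to $\Delta$-regular graphs whose hard-core measure is $(1-o(1))$-concentrated on one of two disjoint families of configurations (the two ``phases''), with the choice of family encoding the answer; hence the two Gibbs distributions are $(1-o(1))$-separated in total variation, and a sampler with fixed constant error $\eta' < 1/2$ decides the instance. (Should one prefer to start only from the stated ``no FPRAS'' consequence, the same follows after an amplification step: replace the hard instance by a $poly(d)$-fold disjoint union of copies --- still $\Delta$-regular, still polynomial size --- which drives the total-variation gap between the two product Gibbs measures to $1-o(1)$.) In either case the assumed sampler is a randomized polynomial-time decision procedure for an NP-hard problem, so $NP = RP$.

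\textbf{Main obstacle.} The choice of $M, R$ and the trivial extension of Theorem~\ref{thm:discrete_to_cont_v1} to a partial $h$ are routine. The delicate point is the last step of the hard direction: upgrading the cited ``no FPRAS'' statement to the non-existence of a sampler within a fixed constant total-variation (equivalently Wasserstein) distance. This leans on the bimodal ``two-phase'' structure of Sly's construction --- or, failing a black-box robust form of it, on the disjoint-union amplification above --- and on verifying that everything stays inside the $\Delta$-regular family at polynomial size; it is also where the exact normalization $h(\vy) = d - |S_\vy|$ matters, since it pins the continuous threshold to $\lambda_c(\Delta)$ itself.
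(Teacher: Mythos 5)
Your proposal follows the same high-level route as the paper: encode the hard-core (independent-set) model on $\Delta$-regular graphs as a hypercube function $h$, lift it to a continuous $f$ via Theorem~\ref{thm:discrete_to_cont_v1}, and import the Weitz/Sly--Sun threshold. The parameter choices ($R=3\sqrt d$, $M=d^3$ vs.\ the paper's $R=2\sqrt d$, $M=4d^2\ln 24R$) are both fine and both land the sampler-equivalence condition well inside $\lambda>1/d$.

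Where you genuinely diverge --- and improve on the paper's write-up --- is in the treatment of non-independent configurations. The paper sets $h(\vy)=d$ whenever $V_\vy$ is \emph{not} independent, so $\Dee_h^{\lambda,\hypercube d}$ is not the hard-core measure: it puts weight $\propto 1$ on every non-independent subset, and for $\lambda$ near $1/d$ (or for any $\lambda$ on graphs where $Z_{HC}\ll 2^d$) that extra mass can dominate exponentially. The paper's assertion that ``the approximate equivalence between sampling from $\exp(\lambda|I|)$ and $\exp(-\lambda f)$ holds'' is then not a direct consequence of Theorem~\ref{thm:discrete_to_cont_v1}; in particular, the hard direction does not automatically transfer, since a constant-TV sampler for a measure mostly supported on non-independent sets need not yield a hard-core sampler. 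Your modification --- defining $h$ as a partial function on the recognizable subset $\mathcal I$, pinning $f\equiv M$ on balls around vertices outside $\mathcal I$, and observing that those balls fold into the existing background term in the estimate of $Z_\lambda^f$ --- makes $\deefl$ concentrate (up to $e^{-\Omega(d)}$) exactly on the hard-core measure and closes this gap cleanly. You also spell out the step the paper elides entirely: upgrading ``no FPRAS'' in the cited threshold theorem to ``no $\eta'$-sampler for a fixed constant $\eta'$,'' either via the bimodal structure of Sly's gadgets or via disjoint-union amplification (plus the standard self-reducibility / Jerrum--Valiant--Vazirani equivalence). Both of these are real improvements in rigor over the paper's one-line appeal to ``the claim follows.''
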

\begin{proof}
  For a graph $G = (V, E)$, the function $h$ on the hypercube $\hypercube{d}$ is defined in the natural way for $d=|V|$. We identify $\vy$ with $V_{\vy} = \{v_i : \vy_i = 1\}$, and set $h(\vy)=d - |V_\vy|$ if $V_{\vy}$ is an independent set in $G$ and to $d$ otherwise.
  We then apply Theorem~\ref{thm:discrete_to_cont_v1}, with $R=2\sqrt{d}$ and $M = 4d^2 \ln 24R$. For this value of $M$, the approximate equivalence between sampling from $\exp(\lambda |I|)$ and sampling from $\exp(-\lambda f)$ holds for $\lambda \geq \frac{1}{d}$.
  The claim follows.
\end{proof}

\section{Related Work}
\label{sec:related}
The problems of counting solutions and sampling solutions are intimately related, and well-studied in the discrete case. The class \#P was defined in~\citet{Valiant79}, where he showed that the problem of computing the permanenet of matrix was complete for this class. This class has been well-studied, and~\cite{Toda91} showed efficient exact algorithms for any \#P-complete problem would imply a collapse of the polynomial hierarchy. Many common problems in \#P however admit efficient approximation schemes, that for any $\eps>0$, allow for a randomized $(1+\eps)$-approximation in time polynomial in $n/\eps$. Such {\em Fully Polynomial Randomized Approximation Schemes} (FPRASes) are known for many problems in \#P, perhaps the most celebrated of them being that for the permananet of a non-negative matrix~\citep{JerrumSV04}.

These FPRASes are nearly always based on Markov Chain methods, and their Metropolis-Hastings~\citep{MetropolisRRTT53,Hastings70} variants. These techniques have been used both in the discrete case (e.g.~\citet{JerrumSV04}) and the continuous case (e.g.~\citet{LovaszV06}). The closely related technique of Langevin dynamics~\citep{RosskyDF78,RobertsS02,DurmusM17} and its Metropolis-adjusted variant are often faster in practice and have only recently been analyzed.

\bibliographystyle{plainnat}
\bibliography{refs}

\begin{thebibliography}{23}
\providecommand{\natexlab}[1]{#1}
\providecommand{\url}[1]{\texttt{#1}}
\expandafter\ifx\csname urlstyle\endcsname\relax
  \providecommand{\doi}[1]{doi: #1}\else
  \providecommand{\doi}{doi: \begingroup \urlstyle{rm}\Url}\fi

\bibitem[Boucheron et~al.(2013)Boucheron, Lugosi, and Massart]{BoucheronLM13}
St\'ephane Boucheron, G\'abor Lugosi, and Pascal Massart.
\newblock \emph{Concentration Inequalities: A Nonasymptotic Theory of
  Independence}.
\newblock Oxford University Press, 2013.

\bibitem[Bubeck(2015)]{Bubeck15}
S{\'e}bastien Bubeck.
\newblock Convex optimization: Algorithms and complexity.
\newblock \emph{Found. Trends Mach. Learn.}, 8\penalty0 (3-4):\penalty0
  231--357, November 2015.
\newblock ISSN 1935-8237.
\newblock \doi{10.1561/2200000050}.
\newblock URL \url{http://dx.doi.org/10.1561/2200000050}.

\bibitem[Durmus and Moulines(2017)]{DurmusM17}
Alain Durmus and Éric Moulines.
\newblock Nonasymptotic convergence analysis for the unadjusted langevin
  algorithm.
\newblock \emph{Ann. Appl. Probab.}, 27\penalty0 (3):\penalty0 1551--1587, 06
  2017.
\newblock \doi{10.1214/16-AAP1238}.
\newblock URL \url{https://doi.org/10.1214/16-AAP1238}.

\bibitem[Hastings(1970)]{Hastings70}
W.~K. Hastings.
\newblock Monte carlo sampling methods using markov chains and their
  applications.
\newblock \emph{Biometrika}, 57\penalty0 (1):\penalty0 97--109, 1970.
\newblock ISSN 00063444.
\newblock URL \url{http://www.jstor.org/stable/2334940}.

\bibitem[Hazan(2016)]{Hazan16}
Elad Hazan.
\newblock Introduction to online convex optimization.
\newblock \emph{Foundations and Trends® in Optimization}, 2\penalty0
  (3-4):\penalty0 157--325, 2016.
\newblock ISSN 2167-3888.
\newblock \doi{10.1561/2400000013}.
\newblock URL \url{http://dx.doi.org/10.1561/2400000013}.

\bibitem[Impagliazzo and Paturi(2001)]{ImpagliazzoP01}
Russell Impagliazzo and Ramamohan Paturi.
\newblock On the complexity of k-{S}{A}{T}.
\newblock \emph{Journal of Computer and System Sciences}, 62\penalty0
  (2):\penalty0 367 -- 375, 2001.
\newblock ISSN 0022-0000.
\newblock \doi{https://doi.org/10.1006/jcss.2000.1727}.
\newblock URL
  \url{http://www.sciencedirect.com/science/article/pii/S0022000000917276}.

\bibitem[Jerrum et~al.(2004)Jerrum, Sinclair, and Vigoda]{JerrumSV04}
Mark Jerrum, Alistair Sinclair, and Eric Vigoda.
\newblock A polynomial-time approximation algorithm for the permanent of a
  matrix with nonnegative entries.
\newblock \emph{J. ACM}, 51\penalty0 (4):\penalty0 671--697, July 2004.
\newblock ISSN 0004-5411.
\newblock \doi{10.1145/1008731.1008738}.
\newblock URL \url{http://doi.acm.org/10.1145/1008731.1008738}.

\bibitem[Justesen(1972)]{Justesen72}
J{\o}rn Justesen.
\newblock Class of constructive asymptotically good algebraic codes.
\newblock \emph{IEEE Trans. Information Theory}, 18:\penalty0 652--656, 1972.

\bibitem[Karp(1972)]{Karp72}
R.~Karp.
\newblock Reducibility among combinatorial problems.
\newblock In R.~Miller and J.~Thatcher, editors, \emph{Complexity of Computer
  Computations}, pages 85--103. Plenum Press, 1972.

\bibitem[Kirkpatrick et~al.(1983)Kirkpatrick, Gelatt, and
  Vecchi]{KirkpatrickGV83}
S.~Kirkpatrick, C.~D. Gelatt, and M.~P. Vecchi.
\newblock Optimization by simulated annealing.
\newblock \emph{Science}, 220\penalty0 (4598):\penalty0 671--680, 1983.
\newblock ISSN 0036-8075.
\newblock \doi{10.1126/science.220.4598.671}.
\newblock URL \url{https://science.sciencemag.org/content/220/4598/671}.

\bibitem[{Lovasz} and {Vempala}(2006)]{LovaszV06}
L.~{Lovasz} and S.~{Vempala}.
\newblock Fast algorithms for logconcave functions: Sampling, rounding,
  integration and optimization.
\newblock In \emph{2006 47th Annual IEEE Symposium on Foundations of Computer
  Science (FOCS'06)}, pages 57--68, Oct 2006.
\newblock \doi{10.1109/FOCS.2006.28}.

\bibitem[Ma et~al.(2019)Ma, Chen, Jin, Flammarion, and Jordan]{MaCJFJ18}
Yi-An Ma, Yuansi Chen, Chi Jin, Nicolas Flammarion, and Michael~I. Jordan.
\newblock Sampling can be faster than optimization.
\newblock \emph{Proceedings of the National Academy of Sciences}, 116\penalty0
  (42):\penalty0 20881--20885, 2019.
\newblock ISSN 0027-8424.
\newblock \doi{10.1073/pnas.1820003116}.
\newblock URL \url{https://www.pnas.org/content/116/42/20881}.

\bibitem[Metropolis et~al.(1953)Metropolis, Rosenbluth, Rosenbluth, Teller, and
  Teller]{MetropolisRRTT53}
Nicholas Metropolis, Arianna~W. Rosenbluth, Marshall~N. Rosenbluth, Augusta~H.
  Teller, and Edward Teller.
\newblock Equation of state calculations by fast computing machines.
\newblock \emph{The Journal of Chemical Physics}, 21\penalty0 (6):\penalty0
  1087--1092, 1953.
\newblock \doi{10.1063/1.1699114}.
\newblock URL \url{https://doi.org/10.1063/1.1699114}.

\bibitem[Nemirovsky and Yudin(1983)]{NemiroskiY}
Arkadii~Semenovich Nemirovsky and David~Borisovich Yudin.
\newblock \emph{Problem complexity and method efficiency in optimization.}
\newblock John Wiley \& Sons, 1983.

\bibitem[Nesterov(2014)]{Nesterov14}
Yurii Nesterov.
\newblock \emph{Introductory Lectures on Convex Optimization: A Basic Course}.
\newblock Springer Publishing Company, Incorporated, 1 edition, 2014.
\newblock ISBN 1461346916, 9781461346913.

\bibitem[O.~Roberts and Stramer(2002)]{RobertsS02}
G~O.~Roberts and Osnat Stramer.
\newblock Langevin diffusions and metropolis-hastings algorithms.
\newblock \emph{Methodology And Computing In Applied Probability}, 4:\penalty0
  337--357, 01 2002.
\newblock \doi{10.1023/A:1023562417138}.

\bibitem[Rossky et~al.(1978)Rossky, Doll, and Friedman]{RosskyDF78}
P.~J. Rossky, J.~D. Doll, and H.~L. Friedman.
\newblock Brownian dynamics as smart monte carlo simulation.
\newblock \emph{The Journal of Chemical Physics}, 69\penalty0 (10):\penalty0
  4628--4633, 1978.
\newblock \doi{10.1063/1.436415}.
\newblock URL \url{https://doi.org/10.1063/1.436415}.

\bibitem[Sly and Sun(2012)]{SlyS12}
Allan Sly and Nike Sun.
\newblock The computational hardness of counting in two-spin models on
  d-regular graphs.
\newblock In \emph{Proceedings of the 2012 IEEE 53rd Annual Symposium on
  Foundations of Computer Science}, FOCS '12, pages 361--369, Washington, DC,
  USA, 2012. IEEE Computer Society.
\newblock ISBN 978-0-7695-4874-6.
\newblock \doi{10.1109/FOCS.2012.56}.
\newblock URL \url{https://doi.org/10.1109/FOCS.2012.56}.

\bibitem[Toda(1991)]{Toda91}
S.~Toda.
\newblock {PP} is as hard as the polynomial-time hierarchy.
\newblock \emph{SIAM Journal on Computing}, 20\penalty0 (5):\penalty0 865--877,
  1991.
\newblock \doi{10.1137/0220053}.
\newblock URL \url{https://doi.org/10.1137/0220053}.

\bibitem[Tosh and Dasgupta(2019)]{ToshD19}
Christopher Tosh and Sanjoy Dasgupta.
\newblock The relative complexity of maximum likelihood estimation, map
  estimation, and sampling.
\newblock In Alina Beygelzimer and Daniel Hsu, editors, \emph{Proceedings of
  the Thirty-Second Conference on Learning Theory}, volume~99 of
  \emph{Proceedings of Machine Learning Research}, pages 2993--3035, Phoenix,
  USA, 25--28 Jun 2019. PMLR.
\newblock URL \url{http://proceedings.mlr.press/v99/tosh19a.html}.

\bibitem[Vadhan(2002)]{Vadhan02}
Salil Vadhan.
\newblock Computational complexity lecture notes.
\newblock 2002.
\newblock URL
  \url{https://people.seas.harvard.edu/~salil/cs221/fall02/scribenotes/nov22.ps}.

\bibitem[Valiant(1979)]{Valiant79}
Leslie~G. Valiant.
\newblock The complexity of computing the permanent.
\newblock \emph{Theoretical Computer Science}, 8\penalty0 (2):\penalty0 189 --
  201, 1979.
\newblock ISSN 0304-3975.
\newblock \doi{https://doi.org/10.1016/0304-3975(79)90044-6}.
\newblock URL
  \url{http://www.sciencedirect.com/science/article/pii/0304397579900446}.

\bibitem[Weitz(2006)]{Weitz06}
Dror Weitz.
\newblock Counting independent sets up to the tree threshold.
\newblock In \emph{Proceedings of the Thirty-eighth Annual ACM Symposium on
  Theory of Computing}, STOC '06, pages 140--149, New York, NY, USA, 2006. ACM.
\newblock ISBN 1-59593-134-1.
\newblock \doi{10.1145/1132516.1132538}.
\newblock URL \url{http://doi.acm.org/10.1145/1132516.1132538}.

\end{thebibliography}

\newpage
\appendix
\section{Efficient Net Construction}
\label{app:eccs}
We start with the existence of error correcting codes for worst case error (e.g.~\citet{Justesen72}).
\begin{theorem}
  There exists a universal constant $\alpha > 0$, and an efficiently computable mapping $\psi : \{0, 1\}^d \rightarrow \{0, 1\}^{4d}$ such that for any $\vy \neq \vy' \in \{0, 1\}^d$, the Hamming distance $d_H(\psi(\vy), \psi(\vy')) \geq 4\alpha d$.
  Further, there is an efficiently computable mapping $\texttt{Decode} : \{0, 1\}^{4d} \rightarrow \{0, 1\}^d$ such that for any $\vy \in \{0, 1\}^D$ and any  $\vw$ satisfying $d_H(\psi(\vy), \vw) \leq \alpha d$, $\texttt{Decode}(\vw)$ returns $\vy$.
  \label{thm:eccs}
\end{theorem}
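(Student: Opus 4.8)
The plan is to obtain $\psi$ from an off-the-shelf family of explicit, asymptotically good binary codes; no new ideas are needed, and the only content is bookkeeping to match the exact block length $4d$. First I would fix a family of binary linear codes $\{C_k\}_{k \ge 1}$ with message length $k$, block length $n(k) = \Theta(k)$, relative minimum distance at least a fixed constant $\delta > 0$, and polynomial-time (indeed near-linear-time) encoding together with a decoder that corrects any $c_0\, n(k)$ errors for some fixed constant $c_0 > 0$. Classical constructions suffice: Justesen's codes, Forney-style concatenation of Reed--Solomon with a Wozencraft inner ensemble, or Sipser--Spielman expander codes all yield explicit families with these guarantees. By choosing the family to have rate at most $1/4$ (arrange $n(k) \ge 4k$, or pad an existing family) while keeping $\delta$ and $c_0$ constant, I may assume $d \le n(d) \le 4d$.

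Next I would set $\psi(\vy)$ to be the codeword $C_d(\vy) \in \{0,1\}^{n(d)}$ padded with $4d - n(d)$ trailing zeros, giving $\psi : \{0,1\}^d \to \{0,1\}^{4d}$. Padding never decreases Hamming distance, so for $\vy \ne \vy'$ we get $d_H(\psi(\vy), \psi(\vy')) \ge \delta\, n(d) \ge \delta d$; hence taking $\alpha$ to be a small enough absolute constant (of order $\min(\delta,c_0)$, e.g.\ $\alpha = \min(\delta,c_0)/4$) makes $d_H(\psi(\vy), \psi(\vy')) \ge 4\alpha d$ hold. For the decoder, on input $\vw$ with $d_H(\psi(\vy), \vw) \le \alpha d$, $\texttt{Decode}$ discards the last $4d - n(d)$ coordinates of $\vw$ and runs the code's decoder on the remaining string; at most $\alpha d \le c_0\, n(d)$ of the errors lie in the first $n(d)$ coordinates, which is within the decoding radius, so it returns $\vy$. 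Efficiency of $\psi$ and of $\texttt{Decode}$ is inherited from the code family.

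I do not anticipate a real obstacle here. The one thing to pin down is the quantitative relationship between $\alpha$, the relative distance $\delta$, and the guaranteed decoding radius $c_0$ of whichever explicit family is cited --- one just needs $\alpha$ small enough in terms of those two constants --- together with the trivial observations that appending zeros preserves pairwise Hamming distances and does not interfere with decoding the message. Everything else is standard coding theory and need not be reproved.
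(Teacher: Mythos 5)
Your approach is the same as the paper's, which does not actually prove this theorem: it merely states it and cites Justesen's codes as a source. Reducing to an off-the-shelf explicit asymptotically good binary code with efficient unique decoding and then doing the bookkeeping (fixing the block length, setting $\alpha$ in terms of the relative distance $\delta$ and decoding radius $c_0$) is exactly what is intended.

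One small slip worth flagging: you write that you want a family with \emph{rate at most} $1/4$, i.e.\ $n(k) \ge 4k$, but then pad the codeword from length $n(d)$ up to $4d$, and claim $d \le n(d) \le 4d$. These two conditions are inconsistent. Since you are padding, you need $n(d) \le 4d$, i.e.\ rate \emph{at least} $1/4$ (which Justesen-type constructions can achieve for any fixed rate $r<1$ with constant $\delta(r)>0$ and constant decoding radius). With that direction corrected, the rest of your bookkeeping goes through: distance $d_H(\psi(\vy),\psi(\vy')) \ge \delta\, n(d) \ge \delta d \ge 4\alpha d$ for $\alpha \le \delta/4$, and the decoder handles up to $c_0\, n(d) \ge c_0 d \ge \alpha d$ errors provided $\alpha \le c_0$, so taking $\alpha = \min(\delta/4, c_0)$ suffices.
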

Armed with this, we prove the desired net construction.
\begin{theorem}
  There is an absolute constant $\eps > 0$ and an efficiently computable map $\phi : \{0, 1\}^d \rightarrow \bd{\mathbf{0}}{1}$ such that (a) $\|\phi(\vy) - \phi(\vy')\| \geq 4\eps$ and (b) $\vx \in \bd{\phi(\vy)}{2\eps}$, we can efficiently recover $\vy$.
\end{theorem}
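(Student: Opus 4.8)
The plan is to push the Hamming space into the Euclidean ball through the error-correcting code of Theorem~\ref{thm:eccs}, and to invert the map by coordinatewise rounding followed by the code's decoder. Concretely, I would let $\alpha>0$, $\psi:\{0,1\}^d\to\{0,1\}^{4d}$ and $\texttt{Decode}$ be as in Theorem~\ref{thm:eccs}, and define $\phi(\vy)\in\Re^{4d}$ to be the codeword $\psi(\vy)$ with each bit $b$ replaced by $\frac{2b-1}{2\sqrt d}$, so that every coordinate of $\phi(\vy)$ has absolute value $\frac{1}{2\sqrt d}$. Then $\|\phi(\vy)\|^2 = 4d\cdot\frac{1}{4d}=1$, so $\phi(\vy)$ lies on (hence inside) the unit ball, and $\phi$ is polynomial-time computable because $\psi$ is. Strictly speaking this produces a map into $B_{4d}(\mathbf 0,1)$ rather than $B_d(\mathbf 0,1)$; since only the order of magnitude of the parameters matters in the applications, one renames the ambient dimension (equivalently, runs the construction on messages of length $d/4$, padding the 3SAT instance with dummy variables) to get the form in the statement.

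For property (a): if $\vy\neq\vy'$ then $\psi(\vy)$ and $\psi(\vy')$ differ in at least $4\alpha d$ coordinates, and each coordinate on which two $\pm\frac{1}{2\sqrt d}$-valued vectors differ contributes $\big(\tfrac{1}{\sqrt d}\big)^2=\tfrac1d$ to the squared Euclidean distance, so $\|\phi(\vy)-\phi(\vy')\|^2\ge 4\alpha$. For property (b): given $\vx$ with $\|\vx-\phi(\vy)\|\le 2\eps$, form $\vw\in\{0,1\}^{4d}$ by setting $\vw_i=1$ iff $x_i\ge 0$. On each coordinate where $\vw_i\neq\psi(\vy)_i$, the numbers $x_i$ and $\phi(\vy)_i=\pm\frac{1}{2\sqrt d}$ have opposite signs, so $|x_i-\phi(\vy)_i|\ge\frac{1}{2\sqrt d}$; hence if $\vw$ and $\psi(\vy)$ differ in $k$ places then $\frac{k}{4d}\le\|\vx-\phi(\vy)\|^2\le 4\eps^2$, i.e.\ $k\le 16\eps^2 d$. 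Taking $\eps$ to be a small enough constant multiple of $\sqrt\alpha$ --- for instance $\eps=\sqrt\alpha/4$ --- gives $k\le\alpha d$, so $\texttt{Decode}(\vw)=\vy$ by Theorem~\ref{thm:eccs}, while at the same time $4\eps=\sqrt\alpha\le 2\sqrt\alpha\le\|\phi(\vy)-\phi(\vy')\|$, which is (a); in fact the balls $\bd{\phi(\vy)}{2\eps}$ are pairwise disjoint (their centers are at distance $\ge 8\eps$), so the recovered $\vy$ is unambiguous, and both the rounding and $\texttt{Decode}$ run in polynomial time.

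There is no real obstacle here --- the construction is standard --- and the only point that needs care is calibrating the single constant $\eps$: it must be small enough that a Euclidean perturbation of size $2\eps$ flips at most $\alpha d$ signs (so the decoder corrects it), while the code's minimum distance $4\alpha d$ must still translate into a Euclidean separation of at least $4\eps$. Because the minimum distance is four times the correctable radius, these two requirements leave a wide margin, and $\eps=\Theta(\sqrt\alpha)$ works; the cosmetic mismatch between $4d$ and $d$ is removed by relabeling.
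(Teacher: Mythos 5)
Your proof takes essentially the same approach as the paper: embed $\{0,1\}^d$ into the Euclidean ball via the error-correcting code $\psi$ from Theorem~\ref{thm:eccs} with coordinates scaled by $\Theta(1/\sqrt d)$, then invert by coordinatewise rounding followed by $\texttt{Decode}$, calibrating $\eps=\Theta(\sqrt\alpha)$ so that a $2\eps$-perturbation flips at most $\alpha d$ bits while codewords stay $4\eps$-separated. The one technical difference is that the paper packs the $4d$ codeword bits four-at-a-time into coordinates taking values in $\{0,\ldots,15\}/(15\sqrt d)$ so the image lands in $\Re^d$ exactly as the statement requires, whereas you map into $\Re^{4d}$ and note the dimension mismatch can be absorbed by relabeling or padding; both handle the issue correctly, and your constant calculation is sound.
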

\begin{proof}
  The map $\phi(\vy)$ will be constructed by embedding $\psi(\vy)$ into $\bd{\mathbf{0}}{1}$.
  We first interpret $\psi(\vy)$ as a vector $\hat{\vy}$ in $\{0, 1, \ldots, 15\}^d$, with the $i$th co-ordinate of $\hat{\vy}$ being defined by the 4 bits $\psi(\vy)_{4i-3}, \psi(\vy)_{4i-2}, \psi(\vy)_{4i-1}, \psi(\vy)_{4i}$.
  The $i$th co-ordinate of $\phi(\vy)$ is set to $\hat{\vy}_i / 15\sqrt{d}$. It is easy to check that $\phi(\vy)$ as defined lies in the unit ball.
  Consider $\vy, \vy' \in \{0, 1\}^d$ such that $d_H(\psi(\vy), \psi(\vy')) \geq 4\alpha d$. This means that $\hat{\vy}$ and $\hat{\vy'}$ differ in at least $\alpha d$ locations.
  This in turn means that $\|\phi(\vy) - \phi(\vy')\| \geq \sqrt{\alpha}/15$.

  Let $\vx \in \bd{\phi(\vy)}{\eps}$. By Markov's inequality, $|\{i : |\phi(\vy)_i - \vx_i| \geq \frac{1}{30\sqrt{d}}\}|$ is at most $900\eps^2 d$. Thus if we greedily decode each co-ordinate of $\vx$ and expand that to a $4d$-bit vector, the decoding will be correct except in $3600\eps^2 d$ locations.
  Letting $\eps = \sqrt{\alpha} / 120$, and using properties of $\psi$, the claim follows.
\end{proof}

\section{Sampling to Optimization}
\label{app:folklore}

The following folklore theorem shows that sampling for high $\lambda$ implies approximate optimization.

\begin{lem}
  \label{thm:sampling_to_opt}
  [Sampling Implies Optimization: Generic]
  Let $f : \bdR \rightarrow \Re$ and suppose that for some $a \in \Re$, the level set $L_a = \{\vx \in \bdR : f(\vx) \geq a\}$ is measurable. Then for any $\eps>0$, and for $\lambda \geq \frac{\ln \frac 1 \delta + \ln \frac {\vol_d(\bdR)}{\vol_d(L_a)}}{\eps}$,
  \begin{align*}
    \Pr_{\vx \sim \Dee_{\lambda}^f}[f(\vx) \geq a + \eps] \leq \delta.
  \end{align*}
\end{lem}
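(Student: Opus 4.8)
The plan is to bound the tail probability directly as a ratio of two integrals and then estimate numerator and denominator by the crudest possible pointwise bounds. Write
\[
  \Pr_{\vx \sim \deefl}[f(\vx) \ge a + \eps]
  = \frac{\int_{\{\vx \in \bdR\,:\, f(\vx) \ge a+\eps\}} \exp(-\lambda f(\vx))\,\ud\vx}{\int_{\bdR} \exp(-\lambda f(\vx))\,\ud\vx}.
\]
On the event in the numerator we have $\exp(-\lambda f(\vx)) \le \exp(-\lambda(a+\eps))$, so the numerator is at most $\exp(-\lambda(a+\eps))\cdot\vold{\bdR}$. For the denominator, which is the partition function $Z_f^{\lambda,\bdR}$, I would discard all the mass except that coming from the low-value level set $L_a = \{\vx \in \bdR : f(\vx) \le a\}$: restricting the integral to $L_a$ and using $\exp(-\lambda f(\vx)) \ge \exp(-\lambda a)$ there gives $Z_f^{\lambda,\bdR} \ge \exp(-\lambda a)\cdot\vold{L_a}$. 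Measurability of the two sets involved is exactly what the hypothesis supplies, and it holds automatically for the continuous $f$ we care about.

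Combining the two estimates, the dependence on $a$ cancels and we get $\Pr_{\vx \sim \deefl}[f(\vx) \ge a+\eps] \le \exp(-\lambda\eps)\cdot \vold{\bdR}/\vold{L_a}$. It then only remains to determine for which $\lambda$ this is at most $\delta$: taking logarithms, $\exp(-\lambda\eps)\cdot\vold{\bdR}/\vold{L_a} \le \delta$ is equivalent to $\lambda\eps \ge \ln\frac1\delta + \ln\frac{\vold{\bdR}}{\vold{L_a}}$, which rearranges to exactly the stated lower bound on $\lambda$, completing the proof. The two degenerate cases are harmless: if $\vold{L_a} = 0$ the claimed threshold is $+\infty$ and the statement is vacuous, while if $Z_f^{\lambda,\bdR} = +\infty$ the left-hand probability is $0$.

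I do not expect a genuine obstacle here --- this is the standard ``as the temperature drops, the Gibbs measure concentrates near the minimum'' fact, made quantitative. The only things needing care are (i) keeping straight the two roles of the cutoff (the level $a$ defines $L_a$, while the bad event is $\{f \ge a+\eps\}$; the $\eps$-gap between them is precisely what produces the decisive factor $\exp(-\lambda\eps)$), and (ii) measurability, already noted. Stating the bound in terms of an arbitrary level $a$ and the volume $\vold{L_a}$ is what makes the Lipschitz and smooth corollaries quoted in the preliminaries fall out immediately: one takes $a = \min f + \eps$ and lower-bounds $\vold{L_a}$ by the volume of a ball of radius $\Theta(\eps/L)$ (resp.\ $\Theta(\sqrt{\eps/\beta})$) about a minimizer, so that $\ln(\vold{\bdR}/\vold{L_a}) = O\!\big(d\ln\frac{LR}{\eps}\big)$ (resp.\ $O\!\big(d\ln\frac{\beta R}{\eps}\big)$).
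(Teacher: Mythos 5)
Your proof is correct and follows essentially the same route as the paper's: bound the numerator pointwise by $\exp(-\lambda(a+\eps))\vold{\bdR}$, bound the partition function from below by restricting to the sublevel set and using the pointwise bound $\exp(-\lambda f)\geq\exp(-\lambda a)$ there, cancel the $a$-dependence, and solve for $\lambda$. Worth flagging: the paper's lemma statement writes $L_a=\{\vx: f(\vx)\ge a\}$ and its proof writes $\exp(\lambda a)\vold{L_a}$ as the lower bound on $Z_f^\lambda$; both signs are typos (they contradict the stated conclusion and the subsequent Lipschitz/smooth corollaries, where $L_a$ must be the sublevel set), and your rewrite with $L_a=\{\vx: f(\vx)\le a\}$ and $Z_f^{\lambda}\ge\exp(-\lambda a)\vold{L_a}$ is the intended reading.
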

\begin{proof}
  We write
  \begin{align*}
    \Pr_{\vx \sim \Dee_{\lambda}^f}[f(\vx) \geq a + \eps] &\leq \frac{\exp(-\lambda(a+\eps))\cdot \vol_d(\bdR)}{Z_f^{\lambda}}\\
    &\leq \frac{\exp(-\lambda(a+\eps))\cdot \vol_d(\bdR)}{\exp(\lambda a)\cdot \vol_d(L_a)}\\
    &= \frac{\exp(-\lambda\eps)}{\frac{\vol_d(L_a)}{\vol_d(\bdR)}}\\
    &\leq \delta.
  \end{align*}
\end{proof}

\begin{theorem}
  \label{thm:sampling_to_opt_lip}
  [Sampling Implies Optimization: Lipschitz $f$]
  Let $f : \bdR \rightarrow \Re$ be $L$-Lipschitz and let $f^\star = \min_{\vx \in \bdR} f(\vx)$. Then for any $\eps>0$ and for $\lambda \geq \frac{2d\ln \frac{4LR}{\eps} + 2\ln \frac 1 \delta}{\eps}$,
  \begin{align*}
    \Pr_{\vx \sim \Dee_)f^{\lambda}}[f(\vx) \geq f^\star + \eps] \leq \delta.
  \end{align*}
\end{theorem}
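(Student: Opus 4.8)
The plan is to derive this as a direct corollary of the generic Lemma~\ref{thm:sampling_to_opt}. Since $f$ is continuous on the compact set $\bdR$, the minimum $f^\star$ is attained, say at $\vx^\star$. I would invoke Lemma~\ref{thm:sampling_to_opt} with the level parameter $a = f^\star + \tfrac{\eps}{2}$ and with slack $\tfrac{\eps}{2}$ playing the role of the lemma's ``$\eps$''; its conclusion then reads $\Pr_{\vx\sim\deefl}[f(\vx)\ge f^\star+\eps]\le\delta$ as soon as $\lambda \ge \tfrac{2}{\eps}\left(\ln\tfrac1\delta + \ln\tfrac{\vold{\bdR}}{\vold{L_a}}\right)$, where $L_a = \{\vx\in\bdR : f(\vx)\le f^\star+\tfrac{\eps}{2}\}$ is the sublevel set used in that lemma's proof. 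So the whole task reduces to a lower bound on $\vold{L_a}$ in terms of $\eps, L, R, d$.

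For the volume bound I would use Lipschitzness to place a small Euclidean ball inside $L_a$: every $\vx$ with $\|\vx-\vx^\star\|\le\tfrac{\eps}{2L}$ satisfies $f(\vx)\le f^\star+\tfrac{\eps}{2}$, so $\bd{\vx^\star}{\tfrac{\eps}{2L}}\cap\bdR\subseteq L_a$. The one nuisance is that $\vx^\star$ may lie on the boundary sphere of $\bdR$, so this intersection need not be a full ball; I would therefore pass to a slightly shrunk-and-shifted ball guaranteed to sit inside the domain. Assume $\eps\le 4LR$ (otherwise $f$ varies by at most $2LR<\tfrac{\eps}{2}$ over all of $\bdR$, so $f(\vx)<f^\star+\eps$ everywhere and the claim holds for every $\lambda$). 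Setting $\theta=\tfrac{\eps}{4LR}\in(0,1]$, the ball of radius $\tfrac{\eps}{4L}$ centered at $(1-\theta)\vx^\star$ is contained in $\bdR$ (since any point of it has norm at most $(1-\theta)\|\vx^\star\|+\tfrac{\eps}{4L}\le(1-\theta)R+\theta R=R$) and in $\bd{\vx^\star}{\tfrac{\eps}{2L}}$ (by the triangle inequality, its distance to $\vx^\star$ is at most $\tfrac{\eps}{4L}+\theta\|\vx^\star\|\le\tfrac{\eps}{2L}$), hence in $L_a$. Thus $\vold{L_a}\ge(\tfrac{\eps}{4L})^d\,\vold{\bd{\mathbf{0}}{1}}$ while $\vold{\bdR}=R^d\,\vold{\bd{\mathbf{0}}{1}}$, giving $\ln\tfrac{\vold{\bdR}}{\vold{L_a}}\le d\ln\tfrac{4LR}{\eps}$. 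Plugging this into the requirement on $\lambda$ yields exactly $\lambda\ge\tfrac{2d\ln\frac{4LR}{\eps}+2\ln\frac1\delta}{\eps}$.

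The only step with any real content is this boundary workaround in the volume estimate; everything else is a mechanical specialization of Lemma~\ref{thm:sampling_to_opt}, and the factor $2^d$ lost by shrinking the ball is harmless, as it merely changes the constant inside the logarithm (from $2LR/\eps$ to $4LR/\eps$), which the stated bound already absorbs. I would state the theorem for an exact sample from $\deefl$ for cleanliness, and remark that if one instead has only an $\eta$-sampler (in statistical distance) the identical computation gives failure probability at most $\delta+\eta$, since the failure event $\{f(\vx)\ge f^\star+\eps\}$ has its probability shifted by at most $\eta$; the Wasserstein variant follows along the lines of Remark~\ref{rem:sampling_wasserstein}.
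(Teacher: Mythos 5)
Your proof is correct and follows essentially the same route as the paper: specialize the generic Lemma~\ref{thm:sampling_to_opt} with $a = f^\star + \eps/2$, and lower-bound $\vold{L_a}$ via a ball placed by Lipschitzness. The one place you add genuine value is the boundary workaround: the paper states that a ball of radius $\eps/2L$ around $\vx^\star$ yields $\vold{L_a}/\vold{\bdR} \geq (\eps/4LR)^d$ without explaining the extra factor of $2^d$, which is precisely the correction for $\vx^\star$ possibly lying on $\partial \bdR$; your shrunk-and-shifted ball of radius $\eps/4L$ centered at $(1-\theta)\vx^\star$ makes this explicit and clean.
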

\begin{proof}
  Set $a = \eps / 2$, and note that by Lipschitz-ness of $f$, the level set $L_a$ contains a ball of radius $\eps/2L$ around the optimizer, so that $\frac{\vol_d(L_a)}{\vol_d(\bdR)}$ is at least $(\eps/4LR)^d$. Applying Theorem~\ref{thm:sampling_to_opt}, the claim follows.
\end{proof}
\begin{theorem}
  [Sampling Implies Optimization: Smooth $f$]
  Let $f : \bdR \rightarrow \Re$ be $\beta$-smooth and let $f^\star = \min_{\vx \in \bdR} f(\vx)$ be attained in the interior of $\bdR$. Then for any $\eps>0$ and for $\lambda \geq \frac{d\ln \frac {\beta R^2}{\eps} + 2\ln \frac 1 \delta}{\eps}$,
  \begin{align*}
    \Pr_{\vx \sim \Dee_)f^{\lambda}}[f(\vx) \geq f^\star + \eps] \leq \delta.
  \end{align*}
\end{theorem}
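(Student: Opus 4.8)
The plan is to follow exactly the template of the Lipschitz case (Theorem~\ref{thm:sampling_to_opt_lip}): derive the statement from the generic Lemma~\ref{thm:sampling_to_opt} by exhibiting a sufficiently large sublevel set of $f$ around its optimum. Since translating $f$ by an additive constant changes neither $\deefl$ nor the event $\{f(\vx) \ge f^\star + \eps\}$, I would first normalize $f^\star = 0$, with the minimum attained at an interior point $\vx^\star \in \bdR$.

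The one substantive ingredient is the quadratic upper bound coming from smoothness. Because $f$ is $\beta$-smooth, $\grad f$ is $\beta$-Lipschitz, so $f(\vx) \le f(\vx^\star) + \grad f(\vx^\star)^\top(\vx - \vx^\star) + \tfrac{\beta}{2}\|\vx - \vx^\star\|^2$ for all $\vx$; and since $\vx^\star$ is an \emph{interior} minimizer, $\grad f(\vx^\star) = \mathbf{0}$, leaving $f(\vx) \le \tfrac{\beta}{2}\|\vx - \vx^\star\|^2$. Hence the sublevel set $L = \{\vx \in \bdR : f(\vx) \le \eps/2\}$ contains the Euclidean ball $\bd{\vx^\star}{\sqrt{\eps/\beta}}$ (intersected with $\bdR$), so $\tfrac{\vol_d(L)}{\vol_d(\bdR)} \ge \bigl(\sqrt{\eps/\beta}/R\bigr)^d = (\eps/\beta R^2)^{d/2}$. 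I would then invoke Lemma~\ref{thm:sampling_to_opt} with $a = \eps/2$ and with the role of its ``$\eps$'' played by $\eps/2$, so that its conclusion reads $\Pr_{\vx \sim \deefl}[f(\vx) \ge \eps] \le \delta$; the required bound on $\lambda$ is then $\tfrac{\ln\frac1\delta + \ln\frac{\vol_d(\bdR)}{\vol_d(L)}}{\eps/2} \le \tfrac{\ln\frac1\delta + \frac d2\ln\frac{\beta R^2}{\eps}}{\eps/2} = \tfrac{d\ln\frac{\beta R^2}{\eps} + 2\ln\frac1\delta}{\eps}$, which is exactly the claimed threshold, with $\sqrt{\eps/\beta}$ playing the role that $\eps/2L$ played in the Lipschitz argument.

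There is no real obstacle here; the only mildly delicate point is the boundary bookkeeping. For $\eps$ small enough that $\sqrt{\eps/\beta}$ is at most the distance from $\vx^\star$ to $\partial\bdR$ --- which is positive precisely by the interior-minimizer hypothesis, and which is the regime the statement cares about --- the ball $\bd{\vx^\star}{\sqrt{\eps/\beta}}$ sits entirely inside $\bdR$ and the clean volume ratio above holds verbatim; the leftover range of $\eps$ (comparable to $\beta R^2$, where the log becomes non-positive) is degenerate, since then $f$ is automatically within $\eps$ of $f^\star$ on all of $\bdR$. Apart from this, the proof is the identical partition-function-versus-volume estimate used in the Lipschitz case, and I would present it at the same level of brevity.
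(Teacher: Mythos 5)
Your proof follows exactly the paper's argument: normalize so $f^\star = 0$, use the vanishing gradient at the interior minimizer plus the smoothness quadratic upper bound to show the $\eps/2$-sublevel set contains a ball of radius on the order of $\sqrt{\eps/\beta}$, and plug the resulting volume ratio into the generic sampling-implies-optimization lemma. (You are in fact slightly more careful than the paper, which writes the radius as $(2\eps/\beta)^{1/2}$ — a minor constant inconsistency with the stated $\lambda$ threshold — and, like the paper, you are somewhat cavalier about the ball possibly spilling outside $\bdR$, but you at least flag that bookkeeping issue.)
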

\begin{proof}
  Let the mimimum $f^\star$ be attained at $\vx^\star$; by assumption $\grad f$ at $\vx^\star$ is zero, and by $\beta$-smoothness, the level set $L_{\eps/2}$ contains a ball of radius $(2\eps/\beta)^\frac{1}{2}$ around $\vx^\star$. The claim follows.
\end{proof}

We note that $\delta = \frac 1 2$ suffices to ensure that efficient samplability implies efficient approximate optimization, since we can run the sampler multiple times.

\section{Deferred Proofs}

\subsection{Incomplete Gamma Functions}
\label{app:incomplete_gamma}
The following was used in the proof of Theorem~\ref{thm:discrete_to_cont_v1}.
\begin{lem}
  \label{lem:incomplete_gamma}
  Let $d$ be a positive integer and $\alpha \geq 16d$. Then
  \begin{align*}
    \int_{0}^{\frac 1 2} \exp(-\alpha r) r^{d-1}\;\ud r &\leq (1+2\exp(-d))\int_{0}^{\frac 1 4} \exp(-\alpha r) r^{d-1}\;\ud r
  \end{align*}
\end{lem}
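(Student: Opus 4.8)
Since $\int_0^{1/2} = \int_0^{1/4} + \int_{1/4}^{1/2}$, the claimed inequality is equivalent to the tail estimate
\[
\int_{1/4}^{1/2} e^{-\alpha r} r^{d-1}\,\ud r \;\le\; 2e^{-d}\int_0^{1/4} e^{-\alpha r} r^{d-1}\,\ud r .
\]
The plan is to bound the tail from above and the head from below, both via the elementary incomplete-gamma tail bound: for any $x \ge 2(d-1)/\alpha$,
\[
\int_x^\infty e^{-\alpha r} r^{d-1}\,\ud r \;\le\; \frac{2}{\alpha}\,x^{d-1}e^{-\alpha x},
\]
which follows from one integration by parts — on $[x,\infty)$ one has $r^{d-2}\le \frac{\alpha}{2(d-1)}r^{d-1}$, so the lower-order remainder term is at most half of the integral itself and can be absorbed. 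Since $\alpha \ge 16d \ge 8(d-1)$, this applies at $x=\tfrac14$. Intuitively this is just the statement that the integrand $e^{-\alpha r}r^{d-1}$, unimodal with mode $(d-1)/\alpha \le 1/16$, carries almost no mass beyond $1/4$.

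First I would apply the bound at $x=\tfrac14$ to get $\int_{1/4}^{1/2} \le \int_{1/4}^\infty e^{-\alpha r}r^{d-1}\,\ud r \le \frac{2}{\alpha}(\tfrac14)^{d-1}e^{-\alpha/4}$. Next, using $\int_0^\infty e^{-\alpha r}r^{d-1}\,\ud r = \Gamma(d)/\alpha^d = (d-1)!/\alpha^d$ and the same bound on $\int_{1/4}^\infty$, I get $\int_0^{1/4} e^{-\alpha r}r^{d-1}\,\ud r \ge (d-1)!/\alpha^d - \frac{2}{\alpha}(\tfrac14)^{d-1}e^{-\alpha/4}$. Substituting both estimates and using $1+2e^{-d}\le 3$, the desired tail estimate reduces to
\[
3\left(\frac{\alpha}{4}\right)^{d-1} e^{-\alpha/4} \;\le\; e^{-d}\,(d-1)! .
\]

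This last step is the only place the hypothesis $\alpha\ge 16d$ is used, and I expect it to be the main (if still routine) obstacle. Since $\beta\mapsto \beta^{d-1}e^{-\beta}$ is nonincreasing for $\beta\ge d-1$ and $\alpha/4\ge 4d\ge d-1$, the left-hand side is largest at $\alpha = 16d$, so it suffices to prove $3(4d)^{d-1}e^{-4d}\le e^{-d}(d-1)!$. For $d=1$ this is $3e^{-4}\le e^{-1}$, which is clear; for $d\ge 2$ I would use $(d-1)!\ge((d-1)/e)^{d-1}$, take logarithms, and bound $\ln\frac{4d}{d-1}\le \ln 4 + \frac1{d-1}$, reducing everything to $\ln 3 + 1 + (d-1)\ln 4 \le 2d+1$, which holds for all $d\ge 1$ since $2-\ln 4 > 0$ and $\ln(3/4)<0$. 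The one thing to watch is that the exponential factor $e^{-\alpha/4}$ at the threshold really does beat the polynomial factor $(\alpha/4)^{d-1}/(d-1)!$ with room to spare for the extra $2e^{-d}$; the choice $\alpha\ge 16d$ is comfortably enough, and nothing else in the argument is delicate.
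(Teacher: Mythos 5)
Your proof is correct, but it takes a genuinely different route from the paper. Both arguments ultimately show that the integrand $e^{-\alpha r}r^{d-1}$ has negligible mass beyond $1/4$, but you do so by elementary, self-contained means: one integration by parts gives the tail bound $\int_x^\infty e^{-\alpha r}r^{d-1}\,\ud r \le \tfrac{2}{\alpha}x^{d-1}e^{-\alpha x}$ for $x\ge 2(d-1)/\alpha$, you lower-bound the head as $\Gamma(d)/\alpha^d$ minus this same tail, and you then close the argument with the Stirling bound $(d-1)!\ge((d-1)/e)^{d-1}$ and a short log computation (the unimodality observation that lets you fix $\alpha=16d$ is a nice touch). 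The paper instead passes to the incomplete gamma function, observes $\gamma(d,\alpha/2)\le\Gamma(d)$, rewrites $1-\gamma(d,\alpha/4)/\Gamma(d)$ as the tail probability $\Pr_{X\sim\Gamma(d,1)}[X\ge \alpha/4]$, and invokes the sub-gamma concentration inequality from Boucheron--Lugosi--Massart to get $\le e^{-d}$ in one line. The paper's route is shorter if one treats that concentration inequality as a black box; yours avoids the external reference at the cost of the explicit Stirling/log bookkeeping. Both are sound, and the constants line up: the reduction $A(1+2e^{-d})\le 2e^{-d}B$ to the sufficient condition $3A\le 2e^{-d}B$ via $1+2e^{-d}\le 3$ is correct, as is the final inequality $\ln 3 + (d-1)\ln 4\le 2d$, equivalent to $\ln(3/4)\le d(2-\ln 4)$, which holds trivially.
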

\begin{proof}
 By definition of the Incomplete Gamma function, for any $a \geq 0$,
  \begin{align*}
    \int_{0}^{a} \exp(-\alpha r) r^{d-1}\;\ud r &=     \frac{1}{\alpha^d} \cdot \int_{0}^{a\alpha} \exp(-s) s^{d-1}\;\ud s\\
    &= \frac{\gamma(d, a\alpha)}{\alpha^d}.
  \end{align*}
  Translated to this language, we wish to bound
  \begin{align*}
    \frac{\gamma(d, \alpha/2)}{\gamma(d, \alpha/4)}.
  \end{align*}

We now bound
\begin{align*}
  1 - \frac{\gamma(d, \alpha/4)}{\gamma(d, \alpha/2)} &\leq
  1 - \frac{\gamma(d, \alpha/4)}{\Gamma(d)} \\
  &= \Pr_{X \sim \Gamma(d, 1)}[X \geq \alpha/4]\\
  &\leq \Pr_{X \sim \Gamma(d, 1)}[X \geq d + 3d]\\
  &\leq \exp(-d).
\end{align*}
Here in the first step, we have used the fact that the Incomplete Gamma function is the cdf of the corresponding Gamma distribution. The last step uses standard tail inequalities for sub-gamma distributions from~\citet[Section 2.4]{BoucheronLM13}.

We have thus shown that $\frac{\gamma(d, \alpha/4)}{\gamma(d, \alpha/2)} \geq 1 - \exp(-d)$. Thus $\gamma(d, \alpha/2) \leq (1-\exp(-d))^{-1} \gamma(d, \alpha/4)$. For $d \geq 1$, $(1-\exp(d))^{-1} \leq (1+2\exp(-d))$ and the claim follows.
\end{proof}

\section{Count Gap Amplification for Cycles}
\label{app:count_gap_amplification}
Theorem~\ref{thm:count_gap_amplification} follows from the NP-hardness of Hamiltonian cycle and the following reduction.
\begin{theorem}
  Given a graph $G = (V, E)$ and an integer $k$, there is a polynomial time algorithm that outputs a graph $G' = (V', E')$ such that
  \begin{description}
    \item[``Completeness'':] If $G$ has a Hamiltonian cycle, then $G'$ has at least $1+ 2^{tn}$ simple cycles of length $nt$.
    \item[``Soundness'':] If $G$ has no Hamiltonian cycle, then $G'$ has exactly one cycle of length $nt$, and no longer simple cycles.
    \item[Efficiency:] It is easy to find one cycle of length $nt$ in $G'$.
  \end{description}
\end{theorem}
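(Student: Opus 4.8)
The plan is to reduce from the Hamiltonian cycle problem, following the template suggested in the excerpt (``see e.g.~\citet{Vadhan02}''), and then take the disjoint union with a planted cycle to handle the soundness/efficiency clauses. Given an instance $G=(V,E)$ of Hamiltonian cycle on $n=|V|$ vertices, fix an integer parameter $t\geq 2$ (we will take $t=\Theta(1)$ so that $nt=\Theta(n)$). First I would construct an auxiliary graph $G_1$ by subdividing each edge $e=\{u,v\}$ of $G$: replace $e$ by a path of $t$ internal degree-two vertices, so $e$ becomes a path of length $t$ from $u$ to $v$ through $t$ fresh vertices. (To keep vertices of degree two so that a cycle through one endpoint of the gadget is forced to traverse the whole gadget — the ``two-connected path'' remark — one uses a path gadget rather than a single subdivision; any simple cycle entering an edge-gadget at $u$ must exit at $v$, and there is a unique way to traverse the gadget as a path, giving no multiplicity here. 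The claimed factor $2^{tn}$ instead comes from a slightly richer gadget, e.g.\ replacing each edge by $t$ parallel length-2 paths, or by $t$ consecutive ``diamond'' gadgets each offering $2$ internal routings; I will use the diamond chain so that each edge-gadget can be traversed in exactly $2^{t}$ ways and contributes $t$ to the cycle length appropriately — details below.)

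Concretely, replace each edge of $G$ by a chain of $t$ diamonds: a diamond is two internally-disjoint length-$2$ paths sharing their two endpoints, so it has a unique entry vertex, a unique exit vertex, and exactly $2$ ways to be traversed by a simple path, contributing exactly $2$ to the length. Chaining $t$ of them per edge and calling the result $G_1=(V_1,E_1)$: any simple cycle in $G_1$ projects to a closed walk in $G$ that uses each original edge at most once (since each diamond's shared endpoints have degree two within the gadget), hence to a simple cycle in $G$; and a simple cycle of length exactly $2tn$ in $G_1$ forces the projected cycle in $G$ to use $n$ edges, i.e.\ to be a Hamiltonian cycle. Conversely, each Hamiltonian cycle of $G$ lifts to exactly $2^{tn}$ simple cycles of length $2tn$ in $G_1$, one for each choice of routing in each of the $tn$ diamonds. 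This establishes: $G$ Hamiltonian $\Rightarrow$ $G_1$ has $\geq 2^{tn}$ simple cycles of length $L_0:=2tn$ and none longer; $G$ not Hamiltonian $\Rightarrow$ $G_1$ has no simple cycle of length $\geq L_0$ (indeed none of length $L_0$, since any such would project to a Hamiltonian cycle). Finally, form $G'$ as the disjoint union of $G_1$ with a fresh simple cycle $C^{(planted)}$ on $L_0$ new vertices. Simple cycles of $G'$ are exactly those of $G_1$ together with $C^{(planted)}$ itself; so in the completeness case $G'$ has $\geq 1+2^{tn}$ simple cycles of length $L_0$, in the soundness case exactly one (namely $C^{(planted)}$) and no longer one, and $C^{(planted)}$ is trivially found as it is a connected component of size $L_0$. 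Setting $L=L_0=2tn$, taking $t$ a fixed constant keeps $G'$ of constant degree (each diamond vertex has degree $2$, original vertices keep their degree, planted-cycle vertices have degree $2$) and keeps $|E'|=\mathrm{poly}(|E|)$, and matches the $L=\Theta(d)$ and $L=|V|/2$ normalizations used later by a trivial rescaling (pad with isolated vertices if one wants $L$ to be an exact fraction of $|V'|$). The construction is clearly polynomial time.

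The routine parts are the two counting directions (lifting Hamiltonian cycles to $2^{tn}$ cycles, and projecting long cycles down to Hamiltonian cycles), which amount to checking that a simple cycle cannot ``reuse'' a diamond and cannot take a shortcut — both immediate from the degree-two structure of the gadgets. \textbf{The main obstacle} is getting the gadget right so that simultaneously (i) traversals of a gadget as a path are in bijection with a $2^{t}$-sized set (to get the $1+2^{L}$ multiplicity with $L=\Theta(n)$ rather than a smaller or larger exponent), (ii) no simple cycle of $G'$ of length $\geq L$ other than the intended ones exists (so that ``longest simple cycle'' is controlled and not merely ``cycle of length exactly $L$''), and (iii) the maximum degree stays an absolute constant. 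The diamond-chain gadget above is designed to thread this needle; verifying (ii) requires the small lemma that in $G_1$ every simple cycle has length $\le 2tn$ with equality only for Hamiltonian lifts, which follows because the projection to $G$ is a simple cycle of length $\le n$ and each edge contributes exactly $2t$. I would write out this projection lemma carefully and leave the bijection count as a one-line induction on the number of diamonds.
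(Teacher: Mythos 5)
Your construction reaches the same conclusion by essentially the same high-level reduction (edge-gadgets that can be traversed in exactly two ways per step, unioned with a disjoint planted cycle), but your gadget is genuinely different from the paper's. The paper keeps the path gadget (vertices $e_1,\dots,e_{t-1}$ between $u$ and $v$) and then simply \emph{duplicates every edge} of that path, producing a multigraph: a cycle traversing the gadget goes $u,e_1,\dots,e_{t-1},v$ but at each of the $t$ steps chooses one of two parallel edges, hence $2^t$ traversals of length exactly $t$, and overall $2^{tn}$ cycles of length $tn$. You instead replace each edge by a chain of $t$ diamond gadgets, keeping the graph \emph{simple}; this also gives $2^t$ traversals per edge-gadget, but each diamond contributes length $2$, so you get cycles of length $2tn$ rather than $tn$. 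You notice this and correctly observe it is only a reparametrization (and that downstream only $L=\Theta(|V'|)$ and ``at least $2^{\Omega(L)}$ cycles'' is actually needed, so the constant $2$ is harmless; one can also halve $t$ at the cost of getting $2^{L/2}$ instead of $2^L$). The trade-off: the paper's multigraph gadget exactly matches the stated length $nt$, but one has to be careful about what ``simple cycle'' means in the presence of parallel edges (length-$2$ cycles appear, and one must argue a long simple cycle cannot double back on a parallel edge); your diamond gadget stays within simple graphs, avoiding those ambiguities, at the cost of the factor-$2$ length discrepancy and a slightly more involved gadget. Both approaches keep the degree bounded by a constant and place the planted cycle as an isolated component, so both ``soundness'' and ``efficiency'' go through identically. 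The verification steps you flag as routine (each gadget is traversed entirely or not at all, and the projection onto $V$ of any long simple cycle is a simple cycle in $G$) do follow from the degree structure exactly as you say, so there is no gap — just a different gadget choice than the paper's.
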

\begin{proof}
The reduction replaces each edge of $G$ by a path of length $t$ with each edge on the path being duplicated. In addition, we add a new set of $tn$ vertices that form a cycle, to ensure that we always have once cycle of length $tn$. We give more details next.

Let $G=(V,E)$ and let $e=(u,v) \in E$. Our new vertex set $V' = V_1 \cup V_2$, where $V_1 = V \cup \{e_i : e \in E, i \in [t-1]\}$ and $V_2 = \{w_1,\ldots, w_{nt}\}$. The vertices in $V_2$ form a simple cycle, i.e.
$E_2 = \{(w_i, w_{i+1} : i \in [nt-1])\} \cup \{(w_{nt}, w_1)\}$. For every edge $e = (u, v)\in E$, $E'$ contains two copies of each edge set $E_1^e = \{(u, e_1), (e_t, v)\} \cup \{(e_i, e_{i+1}) : i \in [t-1]\}$.

It is easy to see that $G'$ always has one cycle of length $nt$ consisting of $E_2$ that can be found efficiently. Whenever $G$ has a Hamiltonian cycle, we can form a cycle of length $nt$, by following the paths corresponding to the edges used in the Hamilitonian cycle in $G$. At each step, we have a choice of two edges to choose from, since $E'$ has parallel edges. This gives us $2^{nt}$ such cycles, proving the ``completeness'' part of the theorem.

Finally note that for any simple cycle of length $nt$ on $V_1$, the projection of the cycle onto the vertices in $V$ is a simple cycle of length $n$, i.e. a Hamiltonian cycle.  This completes the proof of the ``soundness'' part of the theorem.
\end{proof}

\section{Stronger Optimizability}
\label{app:stronger_optimizability}
In this section, we show that the separation between optimization and sampling holds even for stronger notions of $f$ being optimizable.

\begin{figure*}
\begin{center}
\includegraphics[width=.48\textwidth]{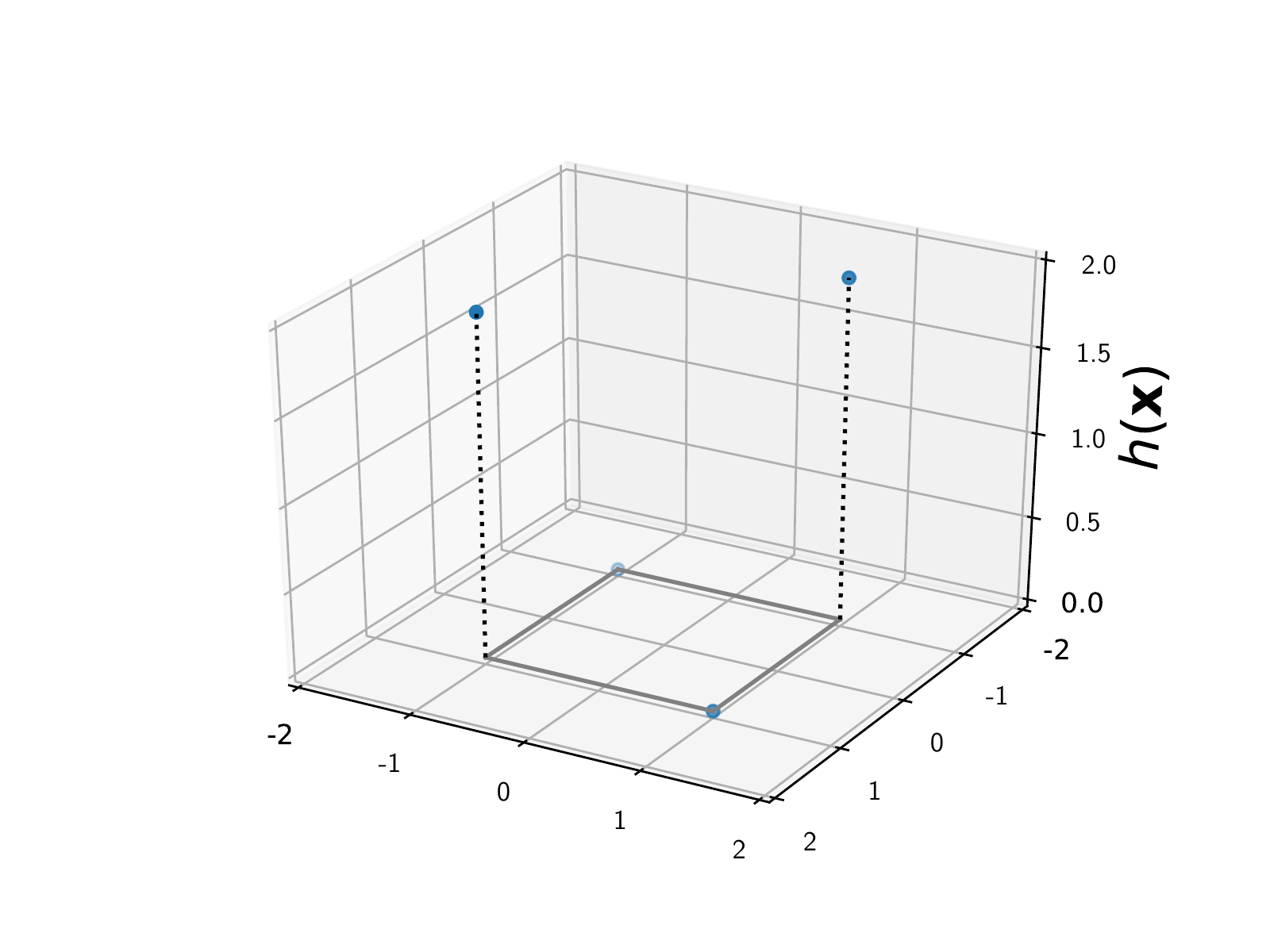}
\includegraphics[width=.48\textwidth]{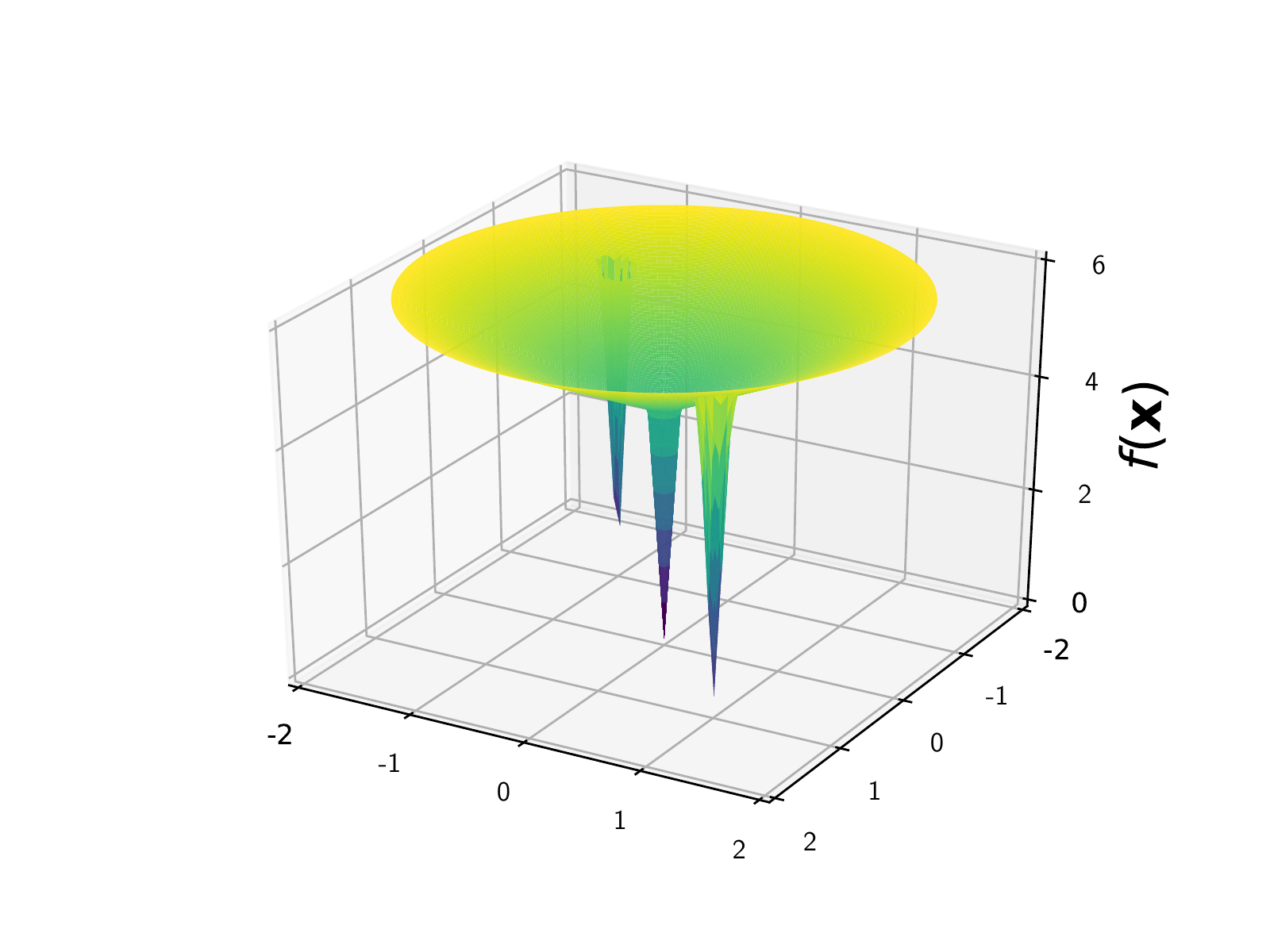}
\end{center}
\caption{(Left) An example of a function $h$ for $d=2$. (Right) The corresponding function $f$ that results from the transformation, for $M=4, R=2$. Note that we create a new minimizer at $\mathbf{0}$.}
\label{fig:functions_h_f}
\end{figure*}

\begin{theorem}
  There is a family $\eff$ of functions $f : \Re^d \rightarrow \Re$ such that the following hold.
  \begin{description}
    \item[Efficiency: ] Each $f \in \eff$ is computable in time $poly(d)$.
    \item[Easy Optimization: ] The zero vector $\mathbf{0}$ is a global optimizer of $f$. Further, $f$ satisfies strict saddle, and a randomly initialized gradient descent algorithm will converge to $\mathbf{0}$ with high probability.
    \item[Hard to Sample: ] For $\lambda \geq 2d$, there is no $1 - \exp(-\Omega(d))$-sampler for $\deefl$ unless $NP=RP$.
  \end{description}
  \label{thm:discrete_to_cont_v2}
\end{theorem}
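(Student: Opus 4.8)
The plan is to start from the hypercube function $h:\hypercube{d}\to[0,d]$ of Theorem~\ref{thm:cycle_h} (built on the cycle instances of Theorem~\ref{thm:count_gap_amplification}), push it through the transformation of Theorem~\ref{thm:discrete_to_cont_v1} with $R=2\sqrt d$, $M=\Theta(d)$ and $\slope=2M$ to obtain $f_0:\bdR\to\Re$, and then to modify $f_0$ in three ways. First, leave the per-vertex valleys $h(\round(\vx))+\slope\,g(\vx)$ on the balls $\bd{\vy}{1/4}$ exactly as they are, but replace the flat bulk value $M$ away from those balls by a strictly radially increasing profile $\psi(\|\vx\|)$ that rises from a value just above $d-L=\min_{\hypercube{d}}h$ at a small radius $\rho_b$ to $\approx M$ near radius $\sqrt d$. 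Second, glue in on $\bd{\mathbf{0}}{\rho_b}$ a narrow, \emph{steep} Huberized bowl with bottom value exactly $d-L$ and slope $\slope_0\gg\slope$. Third, extend $f$ to all of $\Re^d$ by coercive (say quadratic) growth outside $\bdR$, and mollify every seam so that $f$ is $C^2$. All pieces are closed form except one oracle call to $h$, so $f$ is computable in $poly(d)$ time, and $\mathbf{0}$ is a global minimizer since $f\ge d-L$ everywhere, with equality at $\mathbf{0}$.

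For the hardness clause I would rerun the sampler reduction of Theorem~\ref{thm:discrete_to_cont_v1}. Because the restriction of $f$ to $\cup_{\vy}\bd{\vy}{1/4}$ is untouched, conditioning $\deefl$ on this region still yields exactly $\Dee_h^{\lambda,\hypercube{d}}$ dressed with an isotropic log-concave ball, so $\round$ of that conditional sample is a sample from $\Dee_h^{\lambda,\hypercube{d}}$. It remains to check that the complement --- bulk, bowl and exterior of $\bdR$ --- carries at most an $\exp(-\Omega(d))$ fraction of the total mass. The bulk and exterior are bounded exactly as term $(B)$ in the proof of Theorem~\ref{thm:discrete_to_cont_v1}, using $M=\Theta(d)$ and Lemma~\ref{lem:incomplete_gamma}. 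The one genuinely new point --- and the reason the bowl bottoms out at $d-L$ rather than strictly below --- is the bowl's mass: a direct incomplete-gamma comparison gives that for \emph{every} $\lambda\ge 2d$ the bowl contributes at most a $(\slope/\slope_0)^d(1+\exp(-\Omega(d)))=\exp(-\Omega(d))$ fraction of the mass of the $\vy^{(planted)}$ ball, uniformly in $\lambda$, precisely because $\slope_0\gg\slope$. Thus the {\sc Yes}/{\sc No} gap of Theorem~\ref{thm:cycle_h} is preserved, and NP-hardness of $(1-\exp(-\Omega(d)))$-sampling follows exactly as in Theorem~\ref{thm:opt_easier}, including the Wasserstein version via Remark~\ref{rem:sampling_wasserstein}.

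The substance is the ``Easy Optimization'' clause. On each $\bd{\vy}{1/4}$ the function is the strictly convex (Huberized) cone $h(\vy)+\slope\,g(\vx)$, so $\vy$ is a nondegenerate local minimum; these $2^d$ minima are spurious (value $h(\vy)\ge d-L=f(\mathbf{0})$) but that is consistent with the strict-saddle property, which permits any number of non-global local minima and only forbids critical points that are neither local minima nor strict saddles. On the bulk and bowl the function is a strictly radially monotone profile, with no critical point except $\mathbf{0}$ (nondegenerate after Huberization); and at each valley-to-bulk seam, along the inward ray the function first climbs out of the cone and then descends along $\psi$ towards $\mathbf{0}$, so any critical point produced there by the mollification is a strict saddle (negative curvature along the radial direction). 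Hence every critical point of $f$ is a local minimum or a strict saddle. For gradient descent, coercivity handles initializations outside $\bdR$, while inside, the gradient field is radial except near the vertex balls, so a trajectory converges to $\mathbf{0}$ unless it passes through some $\bd{\vy}{1/4}$; the initializations for which this happens lie in a union of $2^d$ thin cones from the origin, each of solid angle $(\Theta(1)/\sqrt d)^{d-1}$, hence of relative measure $\exp(-\Omega(d))$. Invoking the standard fact that randomly initialized gradient descent on a $C^2$ strict-saddle function converges almost surely to a local minimum, we obtain convergence to $\mathbf{0}$ with probability $1-\exp(-\Omega(d))$.

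The main obstacle I anticipate is making the $C^2$ gluing consistent with all of these requirements at once: keeping the per-ball shape literally equal to $h(\vy)+\slope\,g(\vx)$ (so the sampler reduction is exact), making $\psi$ strictly radially monotone down to $\rho_b$ while cutting in the $2^d$ deep valleys without creating any rim critical point that is a local maximum or a degenerate saddle, and keeping the Lipschitz and smoothness constants polynomial in $d$ --- one would presumably Huberize to a $C^{1,1}$ function first (as the paper does elsewhere) and then mollify at a small scale. I expect this to be a careful but essentially mechanical patching argument; the mass bookkeeping and the gradient-flow measure estimate are routine given Theorems~\ref{thm:discrete_to_cont_v1}, \ref{thm:cycle_h} and \ref{thm:count_gap_amplification}.
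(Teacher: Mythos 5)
Your construction has a fatal flaw in the bulk region that breaks the hardness of sampling, and it is precisely the point where you deviate from the paper's transformation.

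You replace the flat bulk value $M$ with a strictly increasing radial profile $\psi(\|\vx\|)$ that rises from a value just above $d-L$ at a small radius $\rho_b$ to $\approx M$ near radius $\sqrt d$. This kills the mass bound. The integrand of the bulk contribution to $Z_f^{\lambda}$ in radial coordinates is $r^{d-1}\exp(-\lambda\psi(r))$; with $\psi$ rising gradually (slope $\approx M/\sqrt d$), this has a mode at $r^\star$ with $r^\star\psi'(r^\star)\approx d/\lambda$, i.e.\ $r^\star\approx \Theta(1/\sqrt d)$ when $\lambda=\Theta(d)$, and there $\psi(r^\star)\approx d-L+\Theta(1)$. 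So the bulk mass is roughly $\exp(-\lambda(d-L))\cdot d^{-d/2}$ up to polynomial factors. Meanwhile each valley contributes roughly $\exp(-\lambda h(\vy))\cdot \Gamma(d)/(\slope\lambda)^d$, and even the planted valley's mass is then smaller than the bulk by a factor of order $\exp(-\lambda/2)(\slope\lambda e/d^{3/2})^d=\exp(\Theta(d\log d))$. Your claim that the bulk is ``bounded exactly as term $(B)$'' is where this goes wrong: term $(B)$ in Theorem~\ref{thm:discrete_to_cont_v1} is $\int_{\bdR}\exp(-\lambda M)\,\ud\vx$ and is tiny only because the integrand there is $\exp(-\lambda M)$; yours is $\exp(-\lambda\psi(r))$ which is astronomically larger near $r^\star$. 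Worse, this dominant bulk mode is a radially symmetric, instance-independent shape: in both the {\sc Yes} and {\sc No} cases, $\deefl$ is essentially the same easy-to-sample radial distribution, so the NP-hardness evaporates.

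The paper avoids this by never letting the bulk value drop below $M$: its $f$ equals $\|\vx\|+M$ throughout the bulk (the linear $\|\vx\|$ term sits \emph{on top of} $M$ rather than replacing it), so the bulk mass is bounded by $\exp(-\lambda M)\cdot\vol_d(\bdR)$ as in term $(B)$. The bowl around $\mathbf 0$ has slope $(1+16M)$, so it reaches $M+1/16$ already at radius $1/16$ and contributes negligibly. To make $\mathbf 0$ a global optimizer while keeping all local minima global (not merely strict-saddle-compatible as in your version), the paper also redefines $h$ to be \emph{binary} — $h(\vy)=0$ iff $E_\vy$ is a length-$L$ simple cycle, else $h(\vy)=d$ — and places valleys only around $h^{-1}(0)$; all valley bottoms and the bowl bottom are at value $0$, and everything else sits above $M$. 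The $\|\vx\|$ term then supplies the radial gradient that drives gradient descent to $\mathbf 0$ from almost every starting point, without introducing any low-value bulk mass.

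If you want to salvage your plan you would need $\psi$ to rise from $d-L$ to $M$ within a ball of radius $\rho_b\lesssim 1/(\slope\lambda)$ and equal $M$ (or more) everywhere beyond that, which is not ``a strictly increasing profile that reaches $M$ near radius $\sqrt d$'' but rather exactly the steep-bowl-plus-high-bulk structure of the paper. The other deviations (keeping all $2^d$ vertex valleys, hence spurious non-global local minima) are formally consistent with the theorem's wording, but they are precisely what forces you to introduce the problematic gentle $\psi$; with the paper's binary $h$ one gets a cleaner landscape for free.
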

\begin{proof}
  The proof is very similar to that of Theorem~\ref{thm:opt_easier}. We will highlight the relevant changes. First, we redefine $h$ slightly: $h$ now takes the value $0$ whenever $E_{\vy}$ defines a simple cycle of length $L$, and takes the value $d$ otherwise. Note that it is NP-hard to determines if $h^{-1}(0)$ has size $1$ or $2^L+1$.

  The construction from $h$ to $f$ is similar to that in Theorem~\ref{thm:discrete_to_cont_v1} with some variation. The function $f_1$ takes the value $M$, except in small balls around $h^{-1}(0)$ and a new minima at $0$.
  In addition, we add a linear term. Recalling the definition of $\round(\vx)$ and $g(\vx)$ from the proof of Theorem~\ref{thm:discrete_to_cont_v1}, we define $f(\vx)$ :
  \begin{align*}
    f(\vx) &= \left\{\begin{array}{ll}
                          \|\vx\| - \sqrt{d} + 16(M+\sqrt{d})\cdot g(\vx) & \mbox{ if } h(\round(\vx))=0 \mbox{ and } g(\vx) \leq  \frac 1 {16}\\
                          \|\vx\| + 16M\cdot \|\vx\| & \mbox{ if }  \|\vx\| \leq  \frac 1 {16}\\
                          \|\vx\| + M & \mbox{ otherwise }\\
                     \end{array}
              \right.
  \end{align*}
  It is easy to verify that $f$ satisfies the following properties:
  \begin{itemize}
    \item $f(\mathbf{0})$ = 0. For $\vy \in \hypercube{d}$, $h(\vy) =  0 \Leftrightarrow f(\vy) = 0$.
    \item $f$ is efficiently computable.
    \item $f$ is continuous and $O(M)$-Lipschitz.
    \item Outside of $\cup_{\vy : h(\vy = 0) \wedge \vy = \mathbf{0}} B_{d}(\vy, \frac 1 {16})$, $f$ is equal to $M+\|\vx\|$.
  \end{itemize}
It follows that the gradient at most points points towards the origin. Thus if the line joining the initial point of a gradient descent and $\mathbf{0}$ avoids hitting $\cup_{\vy : h(\vy = 0) } B_{d}(\vy, \frac 1 {16})$, gradient descent will converge to the origin. Since this happens with high probability, it follows that GD will succeed on $f$ w.h.p. While $f$ as defined is not twice differentiable, convolving $f$ with a small Gaussian gives us a function that is infinitely differentiable, and whose Lipschitz constant, and behavior with respect to gradient descent does not significantly change.

The hardness of sampling proof is essentially unchanged, and thus omitted.
\end{proof}

\section{Improving the Lipschitz constant}

The Lipschitz constant of the construction in Theorems~\ref{thm:discrete_to_cont_v1} and~\ref{thm:discrete_to_cont_v2} is $O(M)$, as the function value must change by $M$ in a ball of constant radius around $\round(\vx)$. We next argue how we can improve the Lipschitz constant to $O(M/\sqrt{d})$.

Towards that goal, we will use error correcting codes to embed the Hypercube in to a set of points that are at distance $\Omega(\sqrt{d})$ from each other. Recall that by Theeorem~\ref{thm:eccs} there are maps $\psi : \hypercube{d} \rightarrow \hypercube{4d}$ and $Decode :: \hypercube{4d} \rightarrow \hypercube{d}$ such that for a constant $\alpha > 0$,
(a) For all $\vy \neq \vy' \in \hypercube{d}$, the Hamming distance $|\psi(\vy) - \psi(\vy')| \geq 4\alpha d$, and
(b) For any $\vy \in \hypercube{d}$, and any $\vz \in \hypercube{4d}$ such that $|\vz - \psi(\vy)| \leq \alpha d$, we have $Decode(\vz) = \vy$.

Armed with this, we will now refine our function $\round$. For $\vx \in \Re^{4d}$, define
\begin{align*}
  \overline{\round}(\vx) = Decode(\round(\vx)).
\end{align*}
Thus $\overline{\round}(\vx) \in \hypercube{d}$. Moreover, there is a constant $\overline{\alpha}>0$ such that if $\vx \in B(\phi(\vy), \overline{\alpha}\sqrt{d})$, then $\overline{\round}(\vx) = \vy$.

Let $\overline{g}(\vx) = \|\vx - \overline{\round}(\vx)\| / (\overline{\alpha}\sqrt{d})$ denote the scaled Euclidean distance from $\vx$ to $\overline{\round}(\vx)$. Let $\slope = 2M$.
We now define the function $\overline{f}$ as follows:
  \begin{align*}
    \overline{f}(\vx) &= \left\{\begin{array}{ll}
                          h(\overline{\round}(\vx)) + \slope\cdot \overline{g}(\vx) & \mbox{ if }  \overline{g}(\vx) \leq \frac{M - h(\overline{\round}(\vx))}{\slope}\\
                          M & \mbox{ if } \overline{g}(\vx) \geq \frac{M - h(\overline{\round}(\vx))}{\slope}\\
                     \end{array}
              \right.
  \end{align*}
  Since $M \geq 2d$, and $h$ has range $[0,d]$, the value $\frac{M - h(\overline{\round}(\vx))}{\overline{\slope}}$ is in the range $[\frac 1 4, \frac 1 2]$. It follows that $f$ takes the value $M$ outside balls of radius $\overline{\alpha} \sqrt{d} / 2$ around the $\psi(\vy)$ vertices, and is strictly smaller than $M$ in balls for radius $\overline{\alpha}\sqrt{d} / 4$.
  Further, the Lipschitz constant of $\overline{f}$ is now $M/\sqrt{d}$. The rest of the proof is as before.

  In particular, this implies that assuming the Exponential Time Hypothesis~\citep{ImpagliazzoP01}, approximate sampling need $\exp(\Omega(d)$ time for continuous Lipschitz functions from $B_d(0, \sqrt{d})$ to $[0, O(d)]$, with Lipschitz constant $O(\sqrt{d})$, and $\lambda = O(\ln d)$.
  \end{document}